\documentclass[12pt]{article}

\usepackage[margin=1in]{geometry}

\usepackage[utf8]{inputenc}
\usepackage[T1]{fontenc}
\usepackage{microtype}

\usepackage{amsmath}
\usepackage{amssymb}
\usepackage{amsfonts}
\usepackage{mathtools}
\usepackage{amsthm}

\usepackage{graphicx}
\usepackage{wrapfig}
\usepackage{float}
\usepackage{booktabs}
\usepackage{makecell}
\usepackage{nicefrac}

\graphicspath{{./figs/}}

\PassOptionsToPackage{numbers,compress}{natbib}
\usepackage{natbib}

\usepackage{xcolor}
\usepackage{url}
\usepackage[
    colorlinks=true,
    linkcolor=blue,
    citecolor=blue,
    urlcolor=blue
]{hyperref}

\usepackage[capitalize,noabbrev]{cleveref}

\usepackage{comment}
\usepackage[normalem]{ulem}

\definecolor{OliveGreen}{rgb}{0.33,0.42,0.18}

\theoremstyle{plain}
\newtheorem{theorem}{Theorem}[section]
\newtheorem{proposition}[theorem]{Proposition}

\newtheorem{corollary}[theorem]{Corollary}

\theoremstyle{definition}

\theoremstyle{remark}
\newtheorem{remark}[theorem]{Remark}

\usepackage{amsmath,bm,mathtools}

\usepackage[bb=pazo]{mathalpha} 

\def\bse{\begin{eqnarray*}}
\def\ese{\end{eqnarray*}}
\def\be{\begin{eqnarray}}
\def\ee{\end{eqnarray}}
\def\bsq{\begin{equation*}}
\def\esq{\end{equation*}}
\def\bq{\begin{equation}}
\def\eq{\end{equation}}

\DeclarePairedDelimiterX{\norm}[1]{\lVert}{\rVert}{#1}

\def\srcdata{{\mathcal{D}_\mathrm{src}}}
\def\tgtdata{{\mathcal{D}_\mathrm{tgt}}}
\def\calidata{{\mathcal{D}_\mathrm{cali}}}
\def\testdata{{\mathcal{D}_\mathrm{test}}}

\def\0{{\bf 0}}

\def\C{{\bf C}} %$\rvq \rmC \vq$

\def\q{{\bf q}}

\def\1{\mathbb{1}}
\def\bomega{{\boldsymbol\omega}}

\def\realNum{{\mathbb{R}}}

\def\spcX{{\mathcal{X}}}
\def\spcY{{\mathcal{Y}}}
\def\srcdata{{\mathcal{D}_\mathrm{src}}}
\def\tgtdata{{\mathcal{D}_\mathrm{tgt}}}
\def\ol{\overline}
\def\ul{\underline }

\def\LP{_\mathrm{LP}}
\def\GE{_\mathrm{GE}}

\def\FS{}
\def\g0{_{g^*}}
\def\wh{\widehat}
\def\wt{\widetilde}

\def\vol{\operatorname{volume}}

\def\pr{\mathbb{P}}

\def\MultiN{\operatorname{Multinomial}}

\def\wtsrg{\boldsymbol{\Omega}}

\def\crgn{\boldsymbol{\mathcal{C}}}
\def\qrgn{\boldsymbol{\mathcal{Q}}}
\DeclareMathOperator{\vect}{vec}

\def\eqref#1{(\ref{#1})}
\def\1{\bm{1}}

\def\trans{^{\top}}

\def\rvc{{\mathbf{c}}}

\def\rvv{{\mathbf{v}}}

\def\rmA{{\mathbf{A}}}
\def\rmB{{\mathbf{B}}}

\def\rmM{{\mathbf{M}}}
\def\rmN{{\mathbf{N}}}

\def\rmX{{\mathbf{X}}}

\def\rm0{{\mathbf{0}}}

\def\vb{{\bm{b}}}

\def\vd{{\bm{d}}}

\def\vv{{\bm{v}}}
\def\vw{{\bm{w}}}
\def\vx{{\bm{x}}}

\DeclareMathAlphabet{\mathsfit}{\encodingdefault}{\sfdefault}{m}{sl}
\SetMathAlphabet{\mathsfit}{bold}{\encodingdefault}{\sfdefault}{bx}{n}

\newcommand{\Var}{\mathrm{Var}}

\DeclareMathOperator*{\argmin}{arg\,min}

\def\bse{\begin{eqnarray*}}
\def\ese{\end{eqnarray*}}
\def\be{\begin{eqnarray}}
\def\ee{\end{eqnarray}}
\def\bsq{\begin{equation*}}
\def\esq{\end{equation*}}
\def\bq{\begin{equation}}
\def\eq{\end{equation}}

\title{\Large
From Matrix Inversion to Constraints:
Provably Tighter Confidence Regions for Importance Weights in Label Shift
}

\author{
Mushan Li,\thanks{Equal contribution. Correspondence to Mushan Li: \texttt{mzl5849@psu.edu}}
\;
Kihyun Han,\footnotemark[1]
 \;and
Yanyuan Ma
\\[0.5em]
Department of Statistics, The Pennsylvania State University\\
University Park, PA 16802, USA\\
\texttt{\{mzl5849,kqh5716,yzm63\}@psu.edu}
}

\date{}

\begin{document}

\maketitle

\begin{abstract}

Importance weights are essential in  domain adaptation under label shift, yet their utility is often undermined by the finite sample uncertainty associated with their estimation. Existing methods typically analyze this uncertainty through Gaussian elimination on interval-valued linear systems, which leads to overly conservative confidence regions and inefficient downstream applications. We  propose a paradigm shift from inversion-based inference to a direct matrix constraint framework. We use this framework to define a joint confidence region and extract marginal intervals via linear programming, deriving provably tighter bounds for importance weights while maintaining exact finite-sample validity. Furthermore, we analyze the confidence region's geometry and  provide the theoretical results for its diameter bounds. Evaluated across text, image, multimodal benchmarks,  including AGNews, MNIST, CIFAR-10, N24News,  and a real-world autonomous driving dataset, nuImages, our approach consistently yields shorter confidence intervals and smaller prediction sets than inversion-based methods.
\end{abstract}

%Keywords:  Confidence Intervals, Confidence Regions, Label Shift, Linear Programming, Matrix Constraints, Prediction Sets, Uncertainty quantification

\section{Introduction} 
%All headings should be lower case (except for first word and proper nouns),
Label shift is a pervasive challenge in machine learning, occurring
when label distributions differ between source and target domains
while the conditional distribution of features remains invariant. This
scenario arises frequently in real-world applications.  For
  example, it occurs when
adapting models trained on historical medical data to new patient
populations, or when deploying classifiers across varying geographic
regions where class prevalences shift due to demographic or
environmental factors. In these cases, the ``importance weights'', the
ratios of target to source label probabilities, act as the essential
bridge for model adaptation, enabling tasks like weighted empirical
risk minimization and rigorous uncertainty quantification
\citep{lipton2018detecting, azizzadenesheli2019regularized,
  garg2020unified, podkopaev2021distribution, si2023pac}. 

The confusion matrix approach, exemplified by Black-Box Shift Estimation (BBSE) \citep{lipton2018detecting} and Regularized Learning under Label Shifts (RLLS) \citep{azizzadenesheli2019regularized}, is the most widely adopted technique because its statistical consistency is independent of classifier calibration. While these methods are effective for point estimation, they generally overlook the inherent statistical uncertainty in the importance weights estimation.
When uncertainty is taken into account, it is mostly considered in the
asymptotic sense. For example, \citet{sandler2018mobilenetv2, alexandari2020maximum}, and \citet{ tian2023elsa} devised maximum likelihood estimators to construct confidence intervals, which rely on the infinite data assumption and, crucially, necessitate well-calibrated classifiers \citep{garg2020unified}.
In finite-sample settings, however, the asymptotic approximations can be
unreliable and may fail to account for uncertainties that propagate to downstream tasks. 

 We are only aware of one work \citep{si2023pac} that tries to establish
  uncertainty quantification in the finite sample setting, 
  where they quantify the importance  weights uncertainty by bounding errors in the
  Gaussian Elimination (GE) procedure.
\citet{si2023pac} considered an interval-version of  BBSE,
where they replaced every element in the confusion matrix
  and every element in the target predicted label proportion vector
by its corresponding confidence interval, 
and modified the traditional GE to solve this interval-valued linear
system. Thus, this method still builds on the  ``matrix inversion''
mindset. By performing row reduction or inversion on intervals, they
introduce unnecessary relaxations and ``noise amplification'', which
leads to loose and suboptimal confidence regions. In practice, this
means that it becomes overly
conservative. In our numerical analysis, we find that the GE procedure
  frequently overestimates the uncertainty
and offers little practical value for decision-making. 

In this work, we propose  Matrix Constraints Linear Programming
  (MaC-LP), a new uncertainty
  quantification method for the importance weights.
It represents a fundamental
  paradigm shift: from matrix inversion to matrix constraints.
  Rather than trying to solve the system algebraically
  or relying on asymptotic approximations, we express the feasible range
  of importance weights as a system of matrix constraints (MaC) derived from
  the confidence intervals of the observations. By applying linear
  programming (LP) to optimize over these constraints, we compute the
  provably tighter confidence regions for importance weights, both
  for the entire weight vector and for each individual element. 

MaC-LP ensures exact finite-sample validity. The
  resulting tighter 
boundaries have a direct, transformative impact on the efficiency of
downstream tasks. By sharpening the confidence intervals of the 
  estimated
importance weights, we allow for more precise uncertainty-aware
predictions. This ensures that even in the label shifted
environments, the estimation and inference in the target domain remain valid in
  finite samples, while
  avoiding excessive conservatism.
Our specific contributions are as follows:
\begin{itemize}
    \item
    Constraint-based framework: We introduce a novel 
      matrix-inversion-free approach  to formulate confidence
      regions for importance weights. We propose to construct the confidence
      regions via  the matrix constraints (MaC) and obtain the marginal confidence intervals by the
    linear programming (LP).
    \item
 Optimality and superiority over asymptotic and inversion method:
  We prove that
      our confidence region has exact finite-sample
      validity, in contrast to the asymptotic based methods. 
We prove that our linear programming-based confidence intervals are narrower than the ones derived from Gaussian elimination.
 We prove that our result achieves the
  smallest confidence region possible in finite-sample regimes given the elementwise
confidence intervals of the confusion matrix and the predicted label proportion
vector.
    \item
      Enhanced downstream efficiency:
      We  prove that the tighter confidence region
      directly translates to precise and smaller
      uncertainty-aware prediction sets in the target domain under
      label shift. 
    \item
    Empirical validation: Through experiments on diverse benchmarks,
    including text (AGNews), images (MNIST, CIFAR-10), multimodal
    data (N24News), and a real-world autonomous driving dataset (nuImages),  we show significant reductions in both confidence
    interval lengths of the importance weights and the size of
    downstream prediction sets compared to those by Gaussian Elimination.
\end{itemize}

\section{Problem formulation and our goal}
\label{sec:preliminaries}
In this section, we introduce the basic notation, motivate the role of importance weights in correcting distribution mismatch under label shift, and present our primary goal of this work.

\paragraph{Basic notation.} Let $\spcX$ and $\spcY = \{1, \dots,
K\}\equiv[K]$ denote the feature and label spaces,
respectively. In the unsupervised label shift setting,
we are given a labeled source dataset $\srcdata = \{(\vx_s,
y_s)\}_{s=1}^{m}$ drawn i.i.d. from the source distribution $P$, and an
unlabeled target dataset $\tgtdata = \{\vx_{m +t}\}_{t=1}^{n}$ drawn from the
target distribution $Q_X$, where $Q_X$ is an $\rmX$-marginal of the target distribution $Q$, and
$(m, n)$ are finite sample sizes. Both $P$ and $Q$ are defined on
$\spcX \times\spcY$ with corresponding probability density/mass
functions $p(\cdot)$ and $q(\cdot)$.
Specifically, $p(\vx, y)$, $p(\vx|y)$, $p(\vx)$ and $p(y)$ denote the
joint distribution, the class-conditional distribution, the marginal
feature and  the marginal label  distribution, respectively, under
$P$, with analogous notations under $Q$. 
In addition, let $\pr$ denote probability, with the underlying
measure/distribution indicated via subscript when necessary (e.g.,
$\pr_{(\rmX,Y)\sim P}, \pr_{\rmX\sim Q_X}$). 

\paragraph{Importance weights under label shift.} The \textit{label
  shift} assumption posits that class-conditional distributions are
invariant across domains, i.e., $q(\vx|y) = p(\vx|y)$ for all $y \in
\spcY$, while the marginal label distributions differ, i.e., $p(y) \neq q(y)$. 
A direct consequence of this assumption is that the joint
distributions are related via a ratio of their label proportions: 
\bse
q(\vx, y) = p(\vx \mid y) q(y)
    = p(\vx\mid y) p(y) \frac{q(y)}{p(y)} = p(\vx, y) \omega_y,
\ese
where $\omega_y \equiv q(y)/p(y)$ represents the \textit{importance
  weights}. These weights, denoted by $\bomega =
(\omega_1,...,\omega_K)\trans$, serve as the fundamental bridge
between domains. For example, it allows us to express target
expectations in terms of the source distribution:
$\mathbb{E}_Q\{h(\rmX, Y)\} = \mathbb{E}_P\{\omega_Y h(\rmX, Y)\}$,
where $ \mathbb{E}$ denotes the expectation operator and $h$ is a
scalar function defined on $\spcX \times\spcY$.  
In practice, while the true importance weights $\bomega^*$ are unknown, using $\wh\bomega$ estimated
by finite samples $\srcdata$ and $\tgtdata$ is an essential way for
downstream tasks such as \textit{weighted conformal prediction}
\citep{podkopaev2021distribution}, where $\wh\bomega$ is used to
recalibrate non-conformity scores to maintain valid coverage
guarantees under label shift. 

\begin{figure}[!htbp]
    \centering    \includegraphics[width=0.55\linewidth]{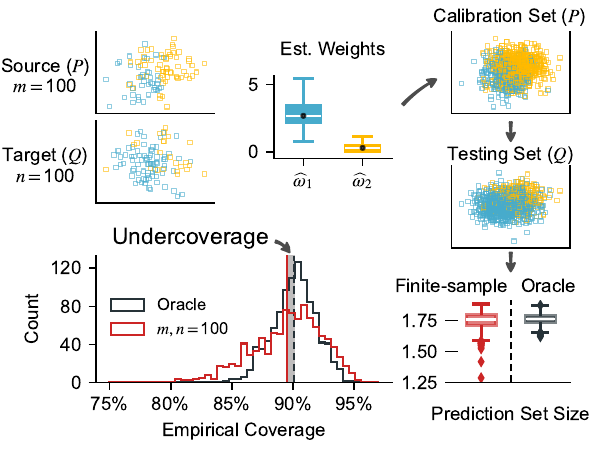}
    \caption{ Undercoverage of weighted conformal prediction using $\wh\bomega$.
    Step-histograms show frequency of empirical coverage over 1,000 repetitions. While the Oracle (black) centers on the 90\% nominal confidence level, point estimation of importance weights (red) introduces a systematic \textit{coverage gap}, even though the estimation is unbiased (true importance weights are represented by black dots in the boxplots of Est.\,Weights). 
    At $m,n=100$, estimation error leads to significant undercoverage. 
    This motivates the need for quantifying the uncertainty in
    estimated importance weights to ensure its application in the
    finite-sample setting.}
        \label{fig:1}
        \end{figure}
    %\end{center}\end{wrapfigure}

\paragraph{Our goal: improve inference for importance weights.}
While obtaining a point estimate $\wh\bomega$ is the standard approach to addressing label shift, its utility is compromised by its uncertainty. In the finite-sample case, when the source and/or target sample sizes ($m, n$) are small, the estimate $\wh\bomega$ becomes highly sensitive to the data noise.

Using prediction set as a representative example, Figure \ref{fig:1} illustrates that replacing $\bomega^*$ by $\wh\bomega$ for downstream tasks, such as weighted conformal prediction, can lead to a coverage gap. It is because when the uncertainty in $\wh\bomega$ is ignored, the resulting coverage probability can fall strictly below the nominal level 90\% when $\srcdata$ and $\tgtdata$ are small, causing prediction sets to be unreliably narrow and miss the ground truth more often than permitted.

This example motivates us to move beyond a point estimate and instead consider a region of importance weights, $\wtsrg\FS$, such that $\pr(\bomega^* \in \wtsrg) \ge 1-\delta$. 
Meanwhile, to avoid an excessively conservative
  prediction set, the confidence region $\wtsrg$ should be as small as
  possible. 
  Also, a tighter confidence region improves sample efficiency, requiring fewer observations to achieve the same target precision at a fixed confidence level.
In the following sections, we introduce the MaC-LP framework to derive these confidence regions through a novel matrix constraint approach.

\section{Finite-sample uncertainty quantification under label shift}
To facilitate the description of our approach, we briefly review the confusion matrix framework for label shift estimation \citep{lipton2018detecting, garg2020unified}. While this framework was previously proposed for point estimation of the importance weights, we extend it to define a direct matrix constraint system that captures finite-sample uncertainty.

\subsection{Preparation: confusion matrix framework}\label{sec:confusion-matrix-approach}
Suppose that we have a classifier $g$, defined as $g:\spcX \to [K]$.
Define the $K\times K$ (joint) confusion matrix
$\C^*=(C_{ij}^*)_{i,j\in[K]}$ by
$C_{ij}^* \equiv \pr_{(\rmX,Y)\sim P}\{g(\rmX)=i,\ Y=j\}$
for $i,j\in[K]$.
In the target domain, define the predicted label proportions
$\q^*=(q_1^*,\ldots,q_K^*)^\top$ by 
$q_k^* \equiv \pr_{\rmX\sim Q_X}\{g(\rmX)=k\}$ for $k\in[K]$.
By the  label shift assumption, we have
$\pr_{(\rmX,Y)\sim P}\{g(\rmX)=i \mid Y=j\}=\pr_{(\rmX,Y)\sim Q}\{g(\rmX)=i \mid Y=j\}$ for all $i,j\in[K]$.
Therefore, for each $i\in[K]$, we obtain that $\sum_{j=1}^K C_{ij}^*\omega_j^*=q_i^*$,
which implies $\C^*\bomega^*=\q^*$. We provide the detailed proof in Appendix~\ref{sec:Comegaq}.

In the source domain, given finite samples $\srcdata = \{(\vx_s, y_s)\}_{s=1}^{m}$, for $i,j\in[K]$, let
$M_{ij}=\sum_{s=1}^m I\{g(\vx_s)=i,\ y_s=j\}$,
where $I\{\cdot\}$ is the indicator function. 
Denote $\rmM = (M_{ij})_{i,j\in[K]}$ which is  a $K\times K$ matrix. Note that $\sum_{i=1}^K\sum_{j=1}^K M_{ij}=m$.
In the target domain, given finite samples $\tgtdata = \{\vx_{m+t}\}_{t=1}^{n}$,
 for $k\in[K]$, let
$N_k=\sum_{t=1}^n I\{g(\vx_{m+t})=k\}$,
and denote $\rmN=(N_1,\ldots,N_K)^\top$.
Similarly, we have $\sum_{k=1}^K N_k=n$.
\citet{lipton2018detecting} derived the plug-in estimator $\wh\bomega=\wh\C^{-1}\wh\q$ by taking $\wh\C= m^{-1}\rmM$ and $\wh\q=n^{-1}\rmN$ (see Appendix~\ref{sec:review-est} for an overview of importance weights estimation).
In contrast, to construct the confidence region for $\bomega^*$, we
marginalize each element of %multinomial
 random matrix
  $\rmM$ and random vector $\rmN$ to get $M_{ij}\sim \mathrm{Binomial}(m,C_{ij}^*)$ and $N_k\sim \mathrm{Binomial}(n,q_k^*)$ for each $i,j,k\in[K]$.
Using standard confidence intervals for binomial probability, such as Clopper-Pearson interval or Hoeffding's inequality-based interval (details in Appendix~\ref{sec:ci-binomial}), we can obtain the elementwise confidence intervals $[\ul C_{ij},\ol C_{ij}]$ for $C_{ij}^*$ and
$[\ul q_k,\ol q_k]$ for $q_k^*$ with coverage levels
$1-\delta_{ij}$ and $1-\delta_k$, respectively, where $\delta_{ij}, \delta_k\ge 0$.

\subsection{MaC-LP: inference via matrix constraints and linear programming}
\label{sec:mac_lp}
Following the notation from Section \ref{sec:confusion-matrix-approach}, given the confidence intervals $[\ul C_{ij}, \ol C_{ij}]$ and $[\ul q_k, \ol q_k]$ for all $i,j,k \in [K]$, we denote the collections of these endpoints as $\ul\C = (\ul C_{ij})_{i,j\in[K]}$, $\ol\C = (\ol C_{ij})_{i,j\in[K]}$, $\ul\q = (\ul q_k)_{k\in[K]}$, and $\ol \q = (\ol q_k)_{k\in[K]}$.
In addition, for any vectors or matrices $\mathbf{a} =
(a_1,\ldots,a_L)$ and $\mathbf{b} = (b_1,\ldots,b_L)$ of the same
size, denote $\mathbf{a} \le \mathbf{b}$ if $a_\ell \le b_\ell$ for
all $\ell\in[L]$, and let the same definitions apply to $\mathbf{a} <
\mathbf{b}$, $\mathbf{a} \ge \mathbf{b}$, and $\mathbf{a} >
\mathbf{b}$. 
With these notations, we recast the confidence intervals as $\C^* \in
[\ul\C, \ol\C] \equiv \crgn$ and $\q^* \in [\ul\q, \ol\q] \equiv
\qrgn$. 

With these confidence intervals in hand, we aim to construct the $(1-\delta)$-level confidence region for the importance weights
 $\bomega^*$.
To this end, we collect all $\bomega$ vectors that can be generated by an admissible pair $(\C,\q)\in\crgn\times\qrgn$ and define
\bse
\wtsrg_0
\equiv
\big\{\bomega:\exists\,\C\in\crgn,\,\exists\,\q\in\qrgn\text{ such that } \C\bomega=\q,\,\bomega\ge\0\big\}.
\ese
By the definition of $\wtsrg_0$, $\bomega^*\in \wtsrg\FS$ provided that $\C^*\in\crgn$ and $\q^*\in\qrgn$.

Let $\delta=\sum_{i, j \in[K]} \delta_{ij} + \sum_{k \in[K]} \delta_{k}$.
If the intervals $[\ul C_{ij}, \ol C_{ij}]$ and $[\ul q_{k}, \ol q_{k}]$
are valid confidence intervals for $C_{ij}^*$ and $q_k^*$, respectively,
then 
the union bound imply that
$\wtsrg_0$ is a valid confidence region for $\bomega^*$, as summarized in Proposition~\ref{prop:unionlaw}. The proof is in Appendix~\ref{sec:unionlaw}.

\begin{proposition}\label{prop:unionlaw}
Let $\delta=\sum_{i, j \in[K]} \delta_{ij} + \sum_{k \in[K]} \delta_{k}$.
Suppose $\pr(C_{ij}^* \in [\ul C_{ij}, \ol C_{ij}]) \ge 1-\delta_{ij}$ for $i,j\in [K]$ and $\pr(q_k^* \in [\ul q_{k}, \ol q_{k}]) \ge 1-\delta_k$ for $k \in [K]$. Then $\wtsrg_0$ is a valid $(1-\delta)$-level confidence
region for $\bomega^*$.
\end{proposition}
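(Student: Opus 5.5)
The plan is to reduce the claim to a containment of events followed by a union bound. Let $E$ be the event that every elementwise interval covers its target, i.e.
\[
E \equiv \Big\{\C^*\in\crgn\Big\}\cap\Big\{\q^*\in\qrgn\Big\}
= \bigcap_{i,j\in[K]}\big\{C_{ij}^*\in[\ul C_{ij},\ol C_{ij}]\big\}\cap\bigcap_{k\in[K]}\big\{q_k^*\in[\ul q_k,\ol q_k]\big\}.
\]
First I will show the deterministic inclusion $E\subseteq\{\bomega^*\in\wtsrg_0\}$. Then I will lower bound $\pr(E)$ by $1-\delta$.

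For the inclusion, fix any realization of the data in $E$ and take the witness pair $(\C,\q)=(\C^*,\q^*)\in\crgn\times\qrgn$. Section~\ref{sec:confusion-matrix-approach} (with the proof in Appendix~\ref{sec:Comegaq}) gives the identity $\C^*\bomega^*=\q^*$. Moreover, $\bomega^*\ge\0$ because each $\omega_y^*=q(y)/p(y)$ is a ratio of probabilities. Here I take as implicit the standing requirement $p(y)>0$ for all $y$, which is needed for $\bomega^*$ to be defined at all. Hence $\bomega^*$ satisfies every defining condition of $\wtsrg_0$. The true parameters $\C^*,\q^*,\bomega^*$ are fixed, and only the endpoints (hence $\crgn,\qrgn,\wtsrg_0$) are random, so this inclusion holds pointwise on the sample space.

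For the probability bound, I will pass to the complement. By De Morgan, $E^c$ is the union of the $K^2+K$ noncoverage events. Subadditivity then gives
\[
\pr(E^c)\le\sum_{i,j}\pr\big(C_{ij}^*\notin[\ul C_{ij},\ol C_{ij}]\big)+\sum_k\pr\big(q_k^*\notin[\ul q_k,\ol q_k]\big)\le\sum_{i,j}\delta_{ij}+\sum_k\delta_k=\delta.
\]
Combining this with the inclusion yields $\pr(\bomega^*\in\wtsrg_0)\ge\pr(E)\ge1-\delta$. This is exactly the claim of Proposition~\ref{prop:unionlaw}.

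No independence between the source and target samples, or among the individual intervals, is needed, since the union bound handles arbitrary dependence. The only points needing care are bookkeeping ones. One is making the measurability of $\{\bomega^*\in\wtsrg_0\}$ harmless: it suffices to bound the probability of the measurable event $E$ it contains, or else to interpret the probability as an inner probability. The other is the nonnegativity constraint, which requires stating that the true weights are well defined and nonnegative. I expect no genuine obstacle; the main step is simply recognizing that the true pair $(\C^*,\q^*)$ serves as the existential witness in the definition of $\wtsrg_0$.
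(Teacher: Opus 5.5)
Your proof is correct and follows the same route as the paper: on the event that all intervals cover, the true pair $(\C^*,\q^*)$ witnesses $\bomega^*\in\wtsrg_0$, and a union bound over the $K^2+K$ noncoverage events gives level $1-\delta$. Your explicit inclusion $E\subseteq\{\bomega^*\in\wtsrg_0\}$ is slightly more careful than the paper. The paper writes $\pr(\bomega^*\notin\wtsrg_0)=\pr(\C^*\notin\crgn \textrm{ or } \q^*\notin\qrgn)$, where in general only ``$\le$'' holds.
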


However, computing $\bomega$ by solving the linear system $\C\bomega=\q$ for 
infinitely many pairs $(\C,\q)\in\crgn\times\qrgn$ is numerically inconvenient. Instead, we directly impose the linear constraints relying on the linear system $\C\bomega=\q$.
Given a $\C \in \crgn$, if $\C\bomega=\q \in \qrgn$, then  the quantity $(\C\bomega)_i=\sum_{j=1}^K C_{ij}\omega_j$ must lie in $[\ul q_i,\ol q_i]$ for each $i\in[K]$.
Since $\C\in[\ul\C,\ol\C]$, a necessary condition for this to be
possible is that 
$[\sum_{j=1}^K \ul C_{ij}\omega_j,\ \sum_{j=1}^K \ol C_{ij}\omega_j]$ overlaps with $[\ul q_i,\ol q_i]$, i.e., for $i\in[K]$,
\bse
\sum_{j=1}^K \ol C_{ij}\omega_j \ge \ul q_i
\text{ and }
\sum_{j=1}^K \ul C_{ij}\omega_j \le \ol q_i.
\ese
The following theorem shows that the above condition is not only necessary but also
sufficient, and hence $\wtsrg_0$ admits an equivalent
characterization via explicit linear inequalities. 
Throughout this work, we use MaC and $\wtsrg$ interchangeably to refer to the construction in Theorem~\ref{thm:lp}. 
The proof of Theorem~\ref{thm:lp} is provided in Appendix~\ref{sec:lp}.
\begin{theorem}\label{thm:lp}
Let $\crgn=[\ul\C,\ol\C]$ and $\qrgn=[\ul\q,\ol\q]$.
Then
\be
\wtsrg\FS
=
\big\{\bomega:\ol\C\bomega\ge\ul\q,\,\ul\C\bomega\le\ol\q,\,\bomega\ge\0\big\}.\label{eq:thmlp}
\ee
 In particular, $\wtsrg\FS$ is a convex set.
\end{theorem}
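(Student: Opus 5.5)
We prove the set equality by establishing the two inclusions separately; the statement presumably identifies $\wtsrg$ with $\wtsrg_0$, so we show that $\wtsrg_0$ equals the polyhedron on the right of \eqref{eq:thmlp}.

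\emph{Necessity} ($\wtsrg_0\subseteq$ RHS). Take $\bomega\in\wtsrg_0$ with witnesses $\C\in\crgn$ and $\q\in\qrgn$. Since $\bomega\ge\0$ and $\ul\C\le\C\le\ol\C$ entrywise, for every row $i$ we have
\[
\sum_j \ul C_{ij}\omega_j\le (\C\bomega)_i=q_i\le\sum_j\ol C_{ij}\omega_j .
\]
Combining this with $\ul q_i\le q_i\le\ol q_i$ gives both inequalities in \eqref{eq:thmlp}. The condition $\bomega\ge\0$ holds by definition of $\wtsrg_0$.

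\emph{Sufficiency} (RHS $\subseteq\wtsrg_0$). This is the substantive direction. Fix $\bomega\ge\0$ satisfying both inequalities; we must construct $\C$ and $\q$. The key point is that the constraints decouple across rows: row $i$ of $\C$ and the entry $q_i$ can be chosen independently of the other rows, because $\crgn$ and $\qrgn$ are boxes. For each row $i$, consider the interpolation $\C_{i\cdot}(t)=(1-t)\ul\C_{i\cdot}+t\,\ol\C_{i\cdot}$ for $t\in[0,1]$. This row stays in the box, and the map $\phi_i(t)=\C_{i\cdot}(t)\bomega$ is continuous and affine, moving from $a_i=\ul\C_{i\cdot}\bomega$ to $b_i=\ol\C_{i\cdot}\bomega$. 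Since $\bomega\ge\0$, we have $a_i\le b_i$. The hypotheses $b_i\ge\ul q_i$ and $a_i\le\ol q_i$, together with $\ul q_i\le\ol q_i$, say exactly that the intervals $[a_i,b_i]$ and $[\ul q_i,\ol q_i]$ intersect. Choose any point $q_i$ in the intersection, for instance $q_i=\min\{\max\{a_i,\ul q_i\},\,\min\{b_i,\ol q_i\}\}$. By the intermediate value theorem, or simply by solving the linear equation in $t$, there is some $t_i\in[0,1]$ with $\phi_i(t_i)=q_i$. The degenerate case $a_i=b_i$ is handled by taking any $t_i$. Stacking the rows gives $\C\in\crgn$ and $\q\in\qrgn$ with $\C\bomega=\q$, so $\bomega\in\wtsrg_0$.

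The main obstacle is this sufficiency step. Two details need care there: the use of $\bomega\ge\0$ to guarantee $a_i\le b_i$, and the observation that the two one-sided inequalities imply a genuine interval overlap. That overlap uses that the intervals are nonempty, i.e.\ $\ul q_i\le\ol q_i$, which I will assume as part of a valid confidence interval. The row-wise decoupling is what makes the construction work. Finally, convexity is immediate: the right-hand side of \eqref{eq:thmlp} is an intersection of finitely many closed half-spaces, hence a convex polyhedron.
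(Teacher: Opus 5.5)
Your proof is correct and has the same structure as the paper's: necessity follows directly from $\bomega\ge\0$, and sufficiency builds $(\C,\q)$ row by row, using that $\crgn$ and $\qrgn$ are boxes. The one difference is how you reach the intermediate value. You interpolate the whole $i$-th row linearly between $\ul\C_{i\cdot}$ and $\ol\C_{i\cdot}$, which lets you take any $q_i$ in $[\ul\C_{i\cdot}\bomega,\ol\C_{i\cdot}\bomega]\cap[\ul q_i,\ol q_i]$. The paper instead sets $q_i=\ol\C_{i\cdot}\bomega$ when that is at most $\ol q_i$; otherwise it switches entries from $\ol C_{ij}$ to $\ul C_{ij}$ one coordinate at a time and adjusts a single entry $\wt C_{i,l_0}$ so the row hits $\ol q_i$. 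Your version is slightly cleaner, while the paper's produces a witness row with at most one non-endpoint entry.
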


Although MaC offers a closed-form description of this region, its shape may be irregular.
In practice, when a regular shape such as an axis-aligned
hyperrectangle is preferred, we can
apply the linear programming on the linear-inequality characterization, $\wtsrg$, in
Theorem~\ref{thm:lp}. It computes the classwise bounds of
$\wtsrg\FS$ by solving $2K$ linear programs: for each $k\in[K]$, 
\be
\ul\omega_k=\min\omega_k
\hbox{ subject to } \ol\C\bomega\ge\ul\q,\ \ul\C\bomega\le\ol\q,\ \bomega\ge\0,\label{eq:lp1}\\
\ol\omega_k=\max\omega_k
\hbox{ subject to } \ol\C\bomega\ge\ul\q,\ \ul\C\bomega\le\ol\q,\ \bomega\ge\0.\label{eq:lp2}
\ee
Collecting these bounds yields the smallest axis-aligned hyperrectangle (``squeezing box'') that contains $\wtsrg\FS$ as
\be\label{eq:omega-lp}
\wtsrg\LP\equiv\prod_{k=1}^K[\ul\omega_k,\ol\omega_k].
\ee
Since the objective $\omega_k$ in \eqref{eq:lp1}--\eqref{eq:lp2} is
linear in $\bomega$ and the constraints are also linear, we can compute
$\wtsrg\LP$ using off-the-shelf linear programming methods (e.g., the
simplex algorithm).  
Figure~\ref{fig:workflow} in Appendix~\ref{sec:workflow} summarizes the construction of $\wtsrg\FS$
and $\wtsrg\LP$ in a binary-classification toy example.

Since $\wtsrg\LP$ is defined as an axis-aligned hyperrectangle, its side lengths serve as a natural metric for uncertainty, enabling a direct comparison of confidence region sizes across different methodologies. A closely related approach by \citet{si2023pac} constructs a hyperrectangular region $\wtsrg\GE$ for $\bomega^*$ by evaluating the inversion $\C^{-1}\q$ for all $\C \in \crgn$ and $\q\in \qrgn$ using interval-based Gaussian elimination (see Appendix~\ref{sec:gaussian}). In Corollary~\ref {cor:www}, we establish the formal set inclusions between the fundamental region $\wtsrg\FS$, our proposed $\wtsrg\LP$, and the conservative $\wtsrg\GE$, while maintaining $1-\delta$ coverage. 
The proof of Corollary~\ref{cor:www} is provided in
Appendix~\ref{sec:www}.
\begin{corollary}\label{cor:www}
When $\wtsrg\GE$ exists, $\wtsrg\FS\subset\wtsrg\LP\subset\wtsrg\GE$.
Under the conditions of Proposition~\ref{prop:unionlaw},
\bse
1-\delta\le\pr(\bomega^*\in\wtsrg\FS)\le\pr(\bomega^*\in\wtsrg\LP)
\le\pr(\bomega^*\in\wtsrg\GE).
\ese
\end{corollary}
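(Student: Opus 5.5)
The chain of coverage probabilities follows from the set inclusions by monotonicity of probability. The lower bound $1-\delta\le\pr(\bomega^*\in\wtsrg)$ is Proposition~\ref{prop:unionlaw}, since Theorem~\ref{thm:lp} identifies $\wtsrg$ with $\wtsrg_0$. So the plan reduces to proving the two inclusions $\wtsrg\subset\wtsrg\LP\subset\wtsrg\GE$.

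The first inclusion is immediate from the construction \eqref{eq:lp1}--\eqref{eq:lp2}. For any $\bomega\in\wtsrg$, the vector $\bomega$ is feasible for each of the $2K$ linear programs. Hence $\ul\omega_k\le\omega_k\le\ol\omega_k$ for every $k$, so $\bomega\in\wtsrg\LP$ by \eqref{eq:omega-lp}. If $\wtsrg$ is empty, the inclusion is trivial; in that case we adopt the convention that $\wtsrg\LP$ is empty.

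The second inclusion is the main obstacle. It requires the defining property of $\wtsrg\GE$ from Appendix~\ref{sec:gaussian}. The existence assumption presumably means that interval Gaussian elimination does not break down, i.e.\ no pivot interval contains zero. The key fact to establish is that interval GE produces an enclosure: for every $\C\in\crgn$ and $\q\in\qrgn$, $\C$ is invertible and $\C^{-1}\q\in\wtsrg\GE$.

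I would prove this by induction over the elimination steps, using the inclusion property of interval arithmetic. Each elimination step applies $+,-,\times,\div$ to intervals. For real numbers lying in the respective input intervals, the result of each operation lies in the output interval, and division is safe because the pivot interval excludes zero. Consequently, running ordinary GE on a point system $(\C,\q)$ keeps every intermediate quantity inside the corresponding interval quantity. In particular, the back-substituted solution $\C^{-1}\q$ lies in the final box. This also shows every such $\C$ is nonsingular, since its pivots are nonzero.

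Given the enclosure property, the inclusion $\wtsrg\LP\subset\wtsrg\GE$ follows. Take any $\bomega\in\wtsrg$. By Theorem~\ref{thm:lp}, $\bomega\in\wtsrg_0$, so there exist $\C\in\crgn$ and $\q\in\qrgn$ with $\C\bomega=\q$. Since $\C$ is invertible, $\bomega=\C^{-1}\q\in\wtsrg\GE$. Thus $\wtsrg\subset\wtsrg\GE$. Because $\wtsrg\GE$ is an axis-aligned box and $\wtsrg\LP$ is the smallest such box containing $\wtsrg$, we get $\wtsrg\LP\subset\wtsrg\GE$.

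One caveat concerns how \citet{si2023pac} handle nonnegativity. If their region additionally intersects with $\bomega\ge\0$, the argument is unchanged, since $\wtsrg$ already imposes $\bomega\ge\0$. If it does not, then $\wtsrg\GE$ is a larger box containing the nonnegative part, and the inclusion still holds.
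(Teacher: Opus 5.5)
Your proposal is correct and follows the paper's proof. The first inclusion $\wtsrg\subset\wtsrg\LP$ comes from feasibility of each $\bomega\in\wtsrg$ in the $2K$ linear programs. The inclusion $\wtsrg\subset\wtsrg\GE$ holds because $\wtsrg\GE$ encloses the solutions $\C^{-1}\q$ for $(\C,\q)\in\crgn\times\qrgn$, with Theorem~\ref{thm:lp} supplying such a pair for each $\bomega\in\wtsrg$. Then $\wtsrg\LP\subset\wtsrg\GE$ because $\wtsrg\LP$ is the smallest box containing $\wtsrg$, and the probability chain follows by monotonicity and Proposition~\ref{prop:unionlaw}.

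The paper takes the enclosure property directly from the construction of \citet{si2023pac} rather than proving it. If you carry out your induction on the update formulas of Appendix~\ref{sec:gaussian}, note that those formulas agree with genuine interval arithmetic only for nonnegative intervals. So ``$\wtsrg\GE$ exists'' should be read as Si et al.'s sign conditions ($\ul C_{ij}^k\ge0$, $\ul C_{ii}^k>0$, $\ul q_i^k\ge0$), not merely nonzero pivots. Under those conditions it suffices to enclose the nonnegative solutions.
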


\subsection{Analysis: diameter of the confidence regions}
As $\wtsrg$ represents the tightest confidence region for $\bomega^*$ given the uncertainty sets $\crgn$ and $\qrgn$, it is essential to characterize its geometric scale.
We summarize the size of $\wtsrg\FS\subset\mathbb{R}^K$ by its diameter under
$\|\cdot\|_\infty$ and $\|\cdot\|_2$ norms.
For a generic set $\mathcal{S}\subset\realNum^K$, define
${\mathrm{diam}}_\infty(\mathcal{S})\equiv\sup_{\bomega,\bomega'\in\mathcal{S}}\|\bomega-\bomega'\|_\infty$
and
${\mathrm{diam}}_2(\mathcal{S})\equiv\sup_{\bomega,\bomega'\in\mathcal{S}}\|\bomega-\bomega'\|_2$. 
Since $\wtsrg\LP$ is the smallest hyperrectangle containing
$\wtsrg\FS$, we have
${\mathrm{diam}}_\infty(\wtsrg\FS)={\mathrm{diam}}_\infty(\wtsrg\LP)=\max_{k\in[K]}(\ol\omega_k-\ul\omega_k)$.

The next theorem provides upper bounds on
${\mathrm{diam}}_\infty(\wtsrg\FS)$ and ${\mathrm{diam}}_2(\wtsrg\FS)$
in terms of $\ul\C$, $\ol\C$, $\ul\q$ and $\ol\q$.
For a generic matrix $\rmA=(A_{ij})_{i,j\in[K]}$, let
$\|\rmA\|_\infty\equiv\max_{i\in[K]}\sum_{j=1}^K|A_{ij}|$ and
$\|\rmA\|_2\equiv\sup_{\|\vv\|_2=1}\|\rmA\vv\|_2$.
The proof of Theorem~\ref{thm:diameter} is provided in Appendix~\ref{sec:diameter}.

\begin{theorem}\label{thm:diameter}
Let $\ul\C\in\realNum^{K\times K}$ be invertible.
If $\|\ul\C^{-1}\|_\infty\|\ol\C-\ul\C\|_\infty<1$, then 
\bse
{\mathrm{diam}}_\infty(\wtsrg\FS)\le\frac{\|\ul\C^{-1}\|_\infty^2\|\ol\C-\ul\C\|_\infty\|\ol\q\|_\infty}{\big(1-\|\ul\C^{-1}\|_\infty\|\ol\C-\ul\C\|_\infty\big)^2}
+\frac{\|\ul\C^{-1}\|_\infty\|\ol\q-\ul\q\|_\infty}{1-\|\ul\C^{-1}\|_\infty\|\ol\C-\ul\C\|_\infty}.
\ese
If $\|\ul\C^{-1}\|_2\|\ol\C-\ul\C\|_2<1$, then 
\bse
{\mathrm{diam}}_2(\wtsrg\FS)\le\frac{\|\ul\C^{-1}\|_2^2\|\ol\C-\ul\C\|_2\|\ol\q\|_2}{\big(1-\|\ul\C^{-1}\|_2\|\ol\C-\ul\C\|_2\big)^2}
+\frac{\|\ul\C^{-1}\|_2\|\ol\q-\ul\q\|_2}{1-\|\ul\C^{-1}\|_2\|\ol\C-\ul\C\|_2}.
\ese
\end{theorem}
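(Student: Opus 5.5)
Our plan is to reduce both bounds to a single perturbation estimate, carried out in a generic operator norm $\|\cdot\|$ (either $\infty$ or $2$). We write $\Delta\equiv\ol\C-\ul\C\ge\0$ and $\kappa\equiv\|\ul\C^{-1}\|\,\|\Delta\|<1$.

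By Theorem~\ref{thm:lp} and the definition of $\wtsrg_0$, every $\bomega\in\wtsrg\FS$ can be written as $\bomega=\C^{-1}\q$ for some $\C\in\crgn$ and $\q\in\qrgn$, provided $\C$ is invertible. We check invertibility first. Any $\C\in\crgn$ has the form $\C=\ul\C+\mathbf{E}$ with $\0\le\mathbf{E}\le\Delta$ entrywise. Entrywise domination gives $\|\mathbf{E}\|_\infty\le\|\Delta\|_\infty$. For the spectral norm it gives $\|\mathbf{E}\|_2\le\|\,|\mathbf{E}|\,\|_2\le\|\Delta\|_2$, using monotonicity of the spectral norm for nonnegative matrices. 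Hence $\|\ul\C^{-1}\mathbf{E}\|\le\kappa<1$. A Neumann series argument then shows that $\C=\ul\C(\mathbf{I}+\ul\C^{-1}\mathbf{E})$ is invertible, with $\|\C^{-1}\|\le\|\ul\C^{-1}\|/(1-\kappa)$ and
\[
\|\C^{-1}-\ul\C^{-1}\|=\|\C^{-1}\mathbf{E}\ul\C^{-1}\|\le\frac{\|\ul\C^{-1}\|^2\|\Delta\|}{1-\kappa}.
\]
Consequently each $\bomega\in\wtsrg\FS$ equals $\C^{-1}\q$ for a genuinely admissible pair $(\C,\q)$.

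Next we compare two points $\bomega=\C^{-1}\q$ and $\bomega'=\C'^{-1}\q'$ of $\wtsrg\FS$. We decompose
\[
\bomega-\bomega'=(\C^{-1}-\C'^{-1})\q+\C'^{-1}(\q-\q').
\]
For the second term, $\q,\q'\in\qrgn$ gives $|\q-\q'|\le\ol\q-\ul\q$ entrywise, so its norm is at most $\|\ol\q-\ul\q\|$; this also holds in the $2$-norm by monotonicity. Combined with the bound on $\|\C'^{-1}\|$, this yields exactly the second summand of the theorem. For the first term, we write $\C^{-1}-\C'^{-1}=\C^{-1}(\C'-\C)\C'^{-1}$. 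Since $|\C'-\C|\le\Delta$ entrywise, $\|\C'-\C\|\le\|\Delta\|$, and each inverse is bounded by $\|\ul\C^{-1}\|/(1-\kappa)$. This gives the factor $\|\ul\C^{-1}\|^2\|\Delta\|/(1-\kappa)^2$. Finally, $\|\q\|\le\|\ol\q\|$ because $\0\le\q\le\ol\q$, using nonnegativity of $\ul\q$; if $\ul\q$ could be negative, we would first intersect with the probability simplex. Taking the supremum over pairs gives both diameter bounds.

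The main obstacle is the first step: justifying that every point of $\wtsrg\FS$ is represented through an \emph{invertible} admissible $\C$, and that entrywise bounds transfer to the $2$-norm. For the latter we will prove the lemma $\|\mathbf{A}\|_2\le\|\mathbf{B}\|_2$ whenever $|\mathbf{A}|\le\mathbf{B}$ entrywise. The proof uses $\|\mathbf{A}\vv\|_2\le\|\mathbf{B}|\vv|\|_2$, and the same inequality handles the vector $\q-\q'$. The rest is routine Neumann-series algebra.
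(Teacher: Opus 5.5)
Your proposal is correct and follows essentially the same route as the paper. The paper uses the same two-term split of $\bomega-\bomega'$ (you swap which point carries which factor), the identity $\rmA^{-1}-\rmB^{-1}=\rmA^{-1}(\rmB-\rmA)\rmB^{-1}$, and the Neumann-series bound $\|(\ul\C+\Delta)^{-1}\|\le\|\ul\C^{-1}\|/(1-\|\ul\C^{-1}\|\|\Delta\|)$. Your invertibility check, the entrywise-to-spectral-norm monotonicity lemma, and the use of $\0\le\q\le\ol\q$ are details the paper leaves implicit (notably when it says the $\ell_2$ case ``follows identically''), so they usefully make the argument complete rather than change it.
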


\begin{remark}
The assumptions $\|\ul\C^{-1}\|_\infty\|\ol\C-\ul\C\|_\infty<1$ and
$\|\ul\C^{-1}\|_2\|\ol\C-\ul\C\|_2<1$ imply that $\C^{-1}$
exists  for all $\C\in\crgn$, and  $\C\mapsto\C^{-1}$ is a smooth function
on $\crgn$.
These conditions are not stringent when the confidence intervals for
$\C^*$ are sufficiently tight, since $\|\ol\C-\ul\C\|$ shrinks with
sample size $m$ while $\|\ul\C^{-1}\|$ is bounded whenever $\ul\C$ is well-conditioned (e.g., nearly diagonal or diagonally dominant). We illustrate the usage of Theorem~\ref{thm:diameter} by
deriving an explicit bound in terms of $(m,n,K)$ in Appendix~\ref{app:diameter}. 
\end{remark}

\subsection{Downstream task: finite sample prediction set}\label{sec:finite-sample-prediction}
Benefiting from the confidence region of $\bomega^*$, we propose two
ways of constructing a prediction set, 
conformal prediction \citep{vovk2005algorithmic} and probably
approximately correct (PAC) prediction 
\citep{valiant1984theory}. We provide finite-sample guarantees for
these prediction sets. In existing works, 
finite-sample coverage is typically shown only when the true
importance weights $\bomega^*$ are known, and
when $\bomega^*$ is replaced by an estimator $\wh\bomega$, results are
generally asymptotic. We 
improve these methods by obtaining prediction sets with a prescribed
coverage level in finite samples  instead of asymptotically,
while allowing unknown $\bomega^*$ by incorporating 
a confidence region for $\bomega^*$. We show the conformal prediction results here, while presenting the PAC prediction results in Appendix \ref{app:PAC}.

\subsubsection{Conformal prediction}\label{sec:conformal-prediction}
We develop a conformal prediction set under label shift. We first
review the weighted split conformal construction for a fixed candidate
weight vector $\bomega$, and then incorporate a confidence region for
$\bomega^*$ to obtain finite-sample coverage when $\bomega^*$ is
unknown. 

We consider the split conformal prediction setting, where the source data is divided into a calibration set
$\calidata=\{z_i=(\wt\vx_i,\wt y_i):i=1,\ldots,m_1\}$ and a training set, where the nonconformity
score $r(\wt\vx,\wt y)\in[0,1]$ is trained on. The prediction set
is constructed using the calibration set $\calidata$. Let $z_0=(\wt\vx_0,\wt y_0)$
denote a new target-domain pair, where $\wt\vx_0$ is observed and
$\wt y_0\in[K]$ is the unknown label. Under the label shift assumption,
the calibration data set $\calidata$ and $z_0$ satisfy the weighted
exchangeability condition of \citet{tibshirani2019conformal}, 
\bse
q(z_0,z_1,\ldots,z_{m_1})=p(z_0,z_1,\ldots,z_{m_1})\prod_{i=0}^{m_1}\omega_{\wt y_i}^*,
\ese
where $p(z_0,\ldots,z_{m_1})=p(z_{\sigma(0)},\ldots,z_{\sigma(m_1)})$
for any permutation
$\sigma:\{0,\ldots,m_1\}\to\{0,\ldots,m_1\}$. This weighted
exchangeability motivates calibrating $r(\wt\vx,\wt y)$ using the weighted
empirical distribution of calibration scores. Let $r_i=r(\wt\vx_i,\wt y_i)$
for $i=1,\ldots,m_1$. For any candidate $\bomega$, we define the level
$(1-\alpha)$ conformal prediction set
\bse
F_{\mathrm{CP}}(\wt\vx_0;\bomega)=\{\wt y_0\in[K]:r(\wt\vx_0,\wt y_0)\le\tau_{\mathrm{CP}}(\wt y_0;\bomega)\},
\ese
where $\tau_{\mathrm{CP}}(\wt y_0;\bomega)$ is given by
\bse
\tau_{\mathrm{CP}}(\wt y_0;\bomega)=Q_{1-\alpha}\left(\sum_{i=1}^{m_1}\kappa_{r_i}\frac{\omega_{\wt y_i}}{\sum_{j=1}^{m_1}\omega_{\wt y_j}+\omega_{\wt y_0}}\right.
\left.+\kappa_1\frac{\omega_{\wt y_0}}{\sum_{j=1}^{m_1}\omega_{\wt y_j}+\omega_{\wt y_0}}\right),
\ese
with $\kappa_r$ denoting the Dirac measure at $r$, and
$Q_{1-\alpha}(\mu)$ denoting the $(1-\alpha)$-th quantile of a probability measure $\mu$.

When the true importance weight $\bomega^*$ is known, the prediction
set $F_{\mathrm{CP}}(\wt\vx_0;\bomega^*)$ achieves $1-\alpha$ coverage by
Theorem~2 of \citet{podkopaev2021distribution}. However, $\bomega^*$
is unknown in practice. Let $\wtsrg_0$ be a confidence region for
$\bomega^*$. To obtain a prediction set that remains valid for all
$\bomega\in\wtsrg_0$, we enlarge the threshold by taking a supremum:
\be\label{eq:tcp-omega0}
\tau_{\mathrm{CP}}(\wt y_0;\wtsrg_0)=\sup_{\bomega\in\wtsrg_0}\tau_{\mathrm{CP}}(\wt y_0;\bomega),
\ee
and define the robust conformal prediction set
\be\label{eq:fcp-omega0}
F_{\mathrm{CP}}(\wt\vx_0;\wtsrg_0)=\{\wt y_0\in[K]:r(\wt\vx_0,\wt y_0)\le\tau_{\mathrm{CP}}(\wt y_0;\wtsrg_0)\}. 
\ee
Compared to the known $\bomega^*$ case, the construction in
\eqref{eq:fcp-omega0} enlarges the prediction set due to not knowing $\bomega^*$.
Nevertheless, we can still control the prediction level at
$1-\delta-\alpha$, as established in
Theorem~\ref{thm:cp-omega0}. Intuitively, coverage can fail only if
$\bomega^*\notin\wtsrg_0$ (with probability at most $\delta$) or if
the conformal event fails (with probability $\alpha$). See
Appendix~\ref{sec:cp-omega0} for the proof.

\begin{theorem}\label{thm:cp-omega0}
If $\pr(\bomega^*\in\wtsrg_0)\ge1-\delta$, then 
\bse
\pr_{(\wt\rmX_0,\wt Y_0)\sim Q}\{\wt Y_0\in F_{\mathrm{CP}}(\wt\rmX_0;\wtsrg_0)\}\ge1-\delta-\alpha.
\ese
\end{theorem}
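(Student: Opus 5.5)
The plan is to intersect the good event $E=\{\bomega^*\in\wtsrg_0\}$ with the conformal coverage event, and to use the fact that on $E$ the robust threshold dominates the oracle threshold. For every $\wt y_0\in[K]$, the definition \eqref{eq:tcp-omega0} of a supremum gives, on $E$,
\[
\tau_{\mathrm{CP}}(\wt y_0;\wtsrg_0)\ge\tau_{\mathrm{CP}}(\wt y_0;\bomega^*).
\]
Consequently $F_{\mathrm{CP}}(\wt\vx_0;\bomega^*)\subset F_{\mathrm{CP}}(\wt\vx_0;\wtsrg_0)$ on $E$. This is a deterministic inclusion that holds pointwise in the calibration data and in $\wt\vx_0$.

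Next, I would write $A=\{\wt Y_0\in F_{\mathrm{CP}}(\wt\rmX_0;\bomega^*)\}$ and $B=\{\wt Y_0\in F_{\mathrm{CP}}(\wt\rmX_0;\wtsrg_0)\}$. The inclusion gives $A\cap E\subset B$, so
\[
\pr(B)\ge \pr(A\cap E)\ge 1-\pr(A^c)-\pr(E^c).
\]
The known-weights result, Theorem~2 of \citet{podkopaev2021distribution}, which the paper already invokes via weighted exchangeability, gives $\pr(A^c)\le\alpha$. The hypothesis gives $\pr(E^c)\le\delta$. Combining these yields the claimed $1-\delta-\alpha$.

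The main obstacle is making sure both probabilities are taken over the same joint law. That law covers the calibration data, the test point $(\wt\rmX_0,\wt Y_0)\sim Q$, and the source and target samples used to build $\wtsrg_0$. The clean route is to take $\wtsrg_0$ measurable with respect to data independent of $\calidata$ and of $z_0$; for example, it can be built from the remaining split of the source data and the target sample. Then the oracle coverage $\pr(A)\ge1-\alpha$ holds conditionally on the data defining $\wtsrg_0$, because $A$ does not depend on them. Integrating gives the unconditional bound, and the union bound above needs no independence at all.

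If full independence is not available, the argument still works. The union bound only requires the marginal statements $\pr(A)\ge1-\alpha$ and $\pr(E)\ge1-\delta$ on a common probability space, so I would simply state the probability as joint over all data. One remaining technicality is measurability of the supremum threshold. It is handled by noting that $\tau_{\mathrm{CP}}(\wt y_0;\cdot)$ takes values among the finitely many points $\{r_1,\ldots,r_{m_1},1\}$, so the supremum is attained and is measurable.
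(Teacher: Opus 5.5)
Your proposal is correct and is essentially the paper's proof: on the event $\{\bomega^*\in\wtsrg_0\}$ the supremum threshold dominates the oracle threshold, so $F_{\mathrm{CP}}(\wt\vx_0;\bomega^*)\subset F_{\mathrm{CP}}(\wt\vx_0;\wtsrg_0)$, and a union bound combining the oracle coverage from Theorem~2 of \citet{podkopaev2021distribution} with $\pr(\bomega^*\notin\wtsrg_0)\le\delta$ gives $1-\delta-\alpha$. Your extra remarks are points the paper leaves implicit and only make the argument more careful: the probability has to be read jointly over the calibration data, the test point, and the data defining $\wtsrg_0$, and the threshold takes only finitely many values.
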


Additionally, the set construction is monotone in the weight region: smaller confidence regions yield smaller prediction sets at the same level. This property, together with the comparison between $\wtsrg\LP$ and $\wtsrg\GE$ in Corollary~\ref{cor:www}, is summarized in Theorem~\ref{thm:cp-monotone-omega}. The proof is provided in Appendix~\ref{sec:cp-monotone-omega}.

\begin{theorem}\label{thm:cp-monotone-omega}
If $\wtsrg_1\subset\wtsrg_2$, then $F_{\mathrm{CP}}(\wt\vx_0;\wtsrg_1)\subset F_{\mathrm{CP}}(\wt\vx_0;\wtsrg_2)$ for all $\wt\vx_0$. In particular, $F_{\mathrm{CP}}(\wt\vx_0;\wtsrg\LP)\subset F_{\mathrm{CP}}(\wt\vx_0;\wtsrg\GE)$.
\end{theorem}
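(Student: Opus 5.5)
The statement to prove is Theorem~\ref{thm:cp-monotone-omega}. The plan is to reduce it to a pointwise comparison of thresholds. By definition \eqref{eq:fcp-omega0}, a label $\wt y_0$ belongs to $F_{\mathrm{CP}}(\wt\vx_0;\wtsrg_i)$ exactly when $r(\wt\vx_0,\wt y_0)\le\tau_{\mathrm{CP}}(\wt y_0;\wtsrg_i)$. So it suffices to show
\[
\tau_{\mathrm{CP}}(\wt y_0;\wtsrg_1)\le\tau_{\mathrm{CP}}(\wt y_0;\wtsrg_2)
\qquad\text{for every } \wt y_0\in[K].
\]
The score $r(\wt\vx_0,\wt y_0)$ does not depend on the weight region at all. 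Hence any $\wt y_0$ satisfying the inequality with the smaller threshold also satisfies it with the larger one, and this holds for every $\wt\vx_0$.

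The threshold inequality follows from monotonicity of the supremum over nested sets. By \eqref{eq:tcp-omega0}, $\tau_{\mathrm{CP}}(\wt y_0;\wtsrg_i)=\sup_{\bomega\in\wtsrg_i}\tau_{\mathrm{CP}}(\wt y_0;\bomega)$. For each fixed $\wt y_0$ and fixed calibration data, the map $\bomega\mapsto\tau_{\mathrm{CP}}(\wt y_0;\bomega)$ is an ordinary real-valued function; it takes values in $[0,1]$ because the scores and the atom at $1$ lie there. Since $\wtsrg_1\subset\wtsrg_2$, the supremum over the larger set is at least the supremum over the smaller set.

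Two edge cases need a brief check. If $\wtsrg_1$ is empty, its supremum is $-\infty$ and the set $F_{\mathrm{CP}}(\wt\vx_0;\wtsrg_1)$ is empty, so the inclusion is trivial. Because the values are bounded, both suprema are finite whenever the sets are nonempty, so no issue arises with infinite thresholds. I will also note in passing that the quantile $Q_{1-\alpha}$ is well defined for each $\bomega\ge\0$ with a positive normalizing sum, which holds because $\omega_{\wt y_0}$ enters the denominator. This is not really needed beyond ensuring the threshold is a real number.

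The "in particular" clause follows by applying the general statement with $\wtsrg_1=\wtsrg\LP$ and $\wtsrg_2=\wtsrg\GE$, using the inclusion $\wtsrg\LP\subset\wtsrg\GE$ from Corollary~\ref{cor:www} (whenever $\wtsrg\GE$ exists). There is no real obstacle here. The only point requiring care is making explicit that the nonconformity score and the calibration data are shared by both constructions, so that the regions enter the comparison only through the supremum.
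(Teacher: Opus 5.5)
Your proposal is correct and matches the paper's proof: the paper likewise notes that \eqref{eq:tcp-omega0} gives $\tau_{\mathrm{CP}}(\wt y;\wtsrg_1)\le\tau_{\mathrm{CP}}(\wt y;\wtsrg_2)$ for nested regions, reads off the inclusion from \eqref{eq:fcp-omega0}, and gets the ``in particular'' clause from $\wtsrg\LP\subset\wtsrg\GE$ in Corollary~\ref{cor:www}. One small correction: your aside that the normalizing sum is positive ``because $\omega_{\wt y_0}$ enters the denominator'' is not quite right, since $\bomega\ge\0$ allows $\omega_{\wt y_0}=0$, but as you say this plays no role in the monotonicity argument.
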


\section{Experiments}\label{sec:experiments}
We evaluate the empirical performance of MaC-LP through standard benchmarks and real-world domain adaptation. Across both settings, we aim to
(i) demonstrate the \textit{tightness} of confidence regions 
for the importance weights
established in  Section \ref{sec:mac_lp}  in
comparison to the interval-based Gaussian elimination approach \citep{si2023pac}; 
(ii) assess the practical impact of MaC-LP on \textit{conformal prediction}, which serves as a representative task of MaC-LP for uncertainty quantification.

\subsection{Experimental design}
\label{sec:experimental_design}
Using standard benchmark datasets,
text data AGNews \citep{zhang2015character},
image data MNIST \citep{lecun1998gradient} and CIFAR-10
\citep{krizhevsky2009learning}, and
text-image news classification data
N24News
\citep{wang2022n24news},
we verify our theoretical guarantees while controlling potential confounding factors.

We split each dataset into two parts: one part is \textit{training
  pool}  for classifier development, the other part is
\textit{analysis set}
for simulating label shift and downstream
applications. The precise split ratios are summarized in
Table~\ref{tab:classifier_summary}.  
We train 10 classifiers for each dataset, the details are in Appendix \ref{app:benchmark_classifier}.

Next, we simulate label
shift by generating the label proportion vectors $\rvv_p$ and
$\rvv_q$ while maintaining invariant class-conditional distributions. We
define an experimental grid of $7 \times 8 = 56$ settings by varying
shift intensity and sample size. For each of  the 10 classifiers
associated with a benchmark, we perform 100 independent trials,
resulting in 1,000 
repetitions per setting to ensure statistical significance. All
samples are shuffled to eliminate ordering bias prior to analysis.

%\paragraph{Label shift simulation.}
We generate the source and target label proportion vectors $\rvv_p$ and
  $\rvv_q$ from a
Dirichlet distribution. The source vector is sampled as $\rvv_p \sim
\text{Dir}(\mathbf{1}_K \cdot 10^4)$, providing a near-uniform
baseline. The target vector is sampled as $\rvv_q \sim
\text{Dir}(\mathbf{1}_K \cdot 10^a)$, where the concentration
parameter $a \in \{-3, -2, -1, 0, 1, 2, 3\}$. Smaller values of $a$ yield higher sparsity across class labels, resulting in more extreme shifts between the source and target distributions.

%\paragraph{Data sampling procedure.}
For each repetition,
the synthetic datasets are generated from the analysis set using a multinomial
stratified sampling procedure. Specifically, given a label proportion
vector $\rvv$ and a sample size $c$, we sample a count vector $\rvc$
from $\MultiN(c, \rvv)$, then for each $k\in[K]$, we draw $c_k$ observations.
Calibration Set ($\calidata$): We draw a source
domain calibration set 
by the stratified sampling
approach without replacement using 
$c=5,000$ and $\rvv = \rvv_p$ from the analysis set. This calibration
set is independent of the following two datasets. 
Test Set ($\testdata$): We draw a target
domain test set 
by the stratified sampling
approach with replacement using 
$c= 2,500$ and $\rvv = \rvv_q$ from the analysis set. 
Estimation Sets ($\srcdata, \tgtdata$): We draw source domain dataset 
by the same%stratified sampling
approach with replacement using 
$c$ from 1000 to 8000, step size of 1000 and $\rvv =
\rvv_p$ from the analysis set. Further, we draw target domain dataset by the same protocol except using $\rvv = \rvv_q$ from the analysis set.

\subsection{Results}\label{sec:experiments_res}
\subsubsection{Objective (i): tightness of confidence regions}
Similar to \citet{si2023pac}, we compute the $1 - \delta/(K^2+K)$
Clopper-Pearson \citep{clopper1934use} interval for every element in
$\C$ and $\q$. Here, we set $\delta = 1/5,000$ (matching $|\calidata|$). Following Section \ref{sec:mac_lp}, we
compute $\wtsrg$ and $\wtsrg\LP$, and following Appendix~\ref{sec:gaussian}, we compute $\wtsrg\GE$. 

\paragraph{LP yields shorter intervals than GE.}
We compute the marginal length of $\wtsrg\LP$ as $l_{k,\mathrm{LP}} = \ol \omega_k - \ul \omega_k$ for all $k \in [K]$, and compute the marginal length of $\wtsrg\GE$ analogously, denoted by $l_{k,\mathrm{GE}}$ for all $k \in [K]$.
 Next, we compute the $\log_2$ length ratio as $r_k = \log_2(l_{k, \mathrm{GE}}/l_{k, \mathrm{LP}})$ for all $k\in[K]$. A one-sample $t$-test was conducted on $\{r_k\}_{k=1}^K$, where the null hypothesis is that the $\log_2$ ratio is less than or equal to zero.
In the left panel of Figure \ref{fig:n24newsLV},
we plot the mean $\log_2$ length ratio, calculated as
$K^{-1}\sum_{k=1}^K r_k$,
against the statistical significance expressed as
 $-\log_{10}(p\text{-value})$
for all 56 label shift settings for N24News. 

In the left panel of Figure \ref{fig:n24newsLV}, LP consistently yields shorter marginal confidence intervals than the GE baseline. 
 The $t$-test results remain well above the
   $-\log_{10}(0.05)$ threshold, indicating that 
  the reduction in uncertainty is statistically significant for N24News. Notably, we observe the highest
  $-\log_{10}(p\text{-value})$ and length ratios, suggesting that LP is particularly effective when dealing
  with complex classifiers where joint class dependencies are more
  pronounced. 
The similar results are shown in Appendix \ref{app:lvres} for AGNews, MNIST, and CIFAR-10.

   \begin{figure}[!htbp]    \centering\includegraphics[width=0.6\linewidth]{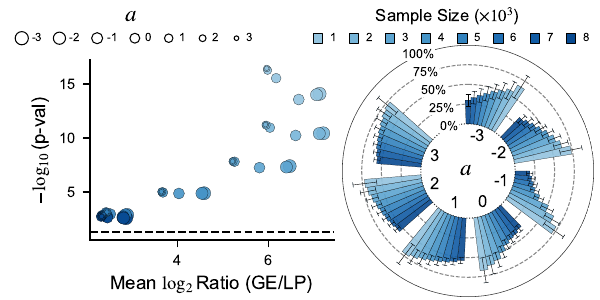}
      \caption{N24News. (Left) Volcano plot of the interval length ratios (GE vs. LP) and their corresponding $t$-test significance levels. (Right) Volume ratio of $\wtsrg$ to $\wtsrg\LP$ across varying shift intensities $a$ and sample sizes.}
      \label{fig:n24newsLV}
    \end{figure}
%\end{wrapfigure}

\paragraph{MaC captures finer dependency than marginal bounds (LP).}
We use a Monte Carlo method to quantify the ratio of the volumes of $\wtsrg$ and
$\wtsrg\LP$. Specifically, we sample 10,000 points uniformly from
$\wtsrg\LP$ to represent $\vol(\wtsrg\LP)$, and then count the
number of points that satisfy the matrix constraints to represent
$\vol(\wtsrg)$.
For N24News, we plot a bar chart
of the mean ratio $\vol(\wtsrg)/\vol(\wtsrg\LP)$ over 1,000
repetitions (presented as a percentage), and the error bars represent
the standard deviation in the right panel of Figure~\ref{fig:n24newsLV}. 

As shown in the bar plots  in Figure
\ref{fig:n24newsLV}, the volume of $\wtsrg$ is
consistently a 
small fraction of the $\wtsrg\LP$ across all shift intensities and all
sample sizes. The
 region  $\wtsrg$, by considering 
dependent structure of the linear system, inherently excludes vast
regions of ``infeasible'' weight combinations that violate the
 matrix constraints. The region $\wtsrg$ maintains a tight polytope
even under extreme label shift (lower $a$). This volumetric tightness
is the primary driver for improved downstream efficiency, as it
directly restricts the search space for the weighted quantiles in the
conformal prediction phase. 
The similar results for AGNews, MNIST, and CIFAR-10 are in Appendix \ref{app:lvres}.

\subsubsection{Objective (ii): assessment of downstream task}
To assess the practical utility of MaC-LP, we apply the confidence regions $\wtsrg$ and $\wtsrg\LP$ to the weighted conformal prediction (WCP) framework \citep{podkopaev2021distribution}. For comparison, we evaluate the WCP procedure using the $\wtsrg\GE$, the Oracle vector $\bomega^*$, and the point estimate $\wh\bomega$ derived from the BBSE method \citep{lipton2018detecting}. These experiments are implemented across 56 label shift settings for each benchmark and are repeated 1,000 times to ensure statistical robustness.

For each label shift setting, we utilize an independent calibration set $\calidata$ ($m_1=5,000$) 
to compute non-conformity scores (details in Appendix \ref{app:implementationCP}). We construct prediction sets $F_{\mathrm{CP}}(\vx_t; \star)$ for the test instances in $\testdata$ by calculating weighted quantiles over the methods $\star \in \{\bomega^*, \wh\bomega, \wtsrg, \wtsrg\LP, \wtsrg\GE\}$. The confidence levels for $\wtsrg$, $\wtsrg\LP,$ and $\wtsrg\GE$ are fixed at $1-\delta$ with $\delta = 1/5,000$. This specific choice of $\delta$ ensures that weight estimation error does not compromise the target coverage level $1-\alpha$, where $\alpha = 0.1$. We evaluate the impact of importance weights estimation on the
trade-off between statistical validity and predictive efficiency using two standard metrics, Empirical Coverage and Average Set Size.

\begin{figure}[!htbp]%{r}{0.99\textwidth}
    \centering\includegraphics[width=.85\linewidth]{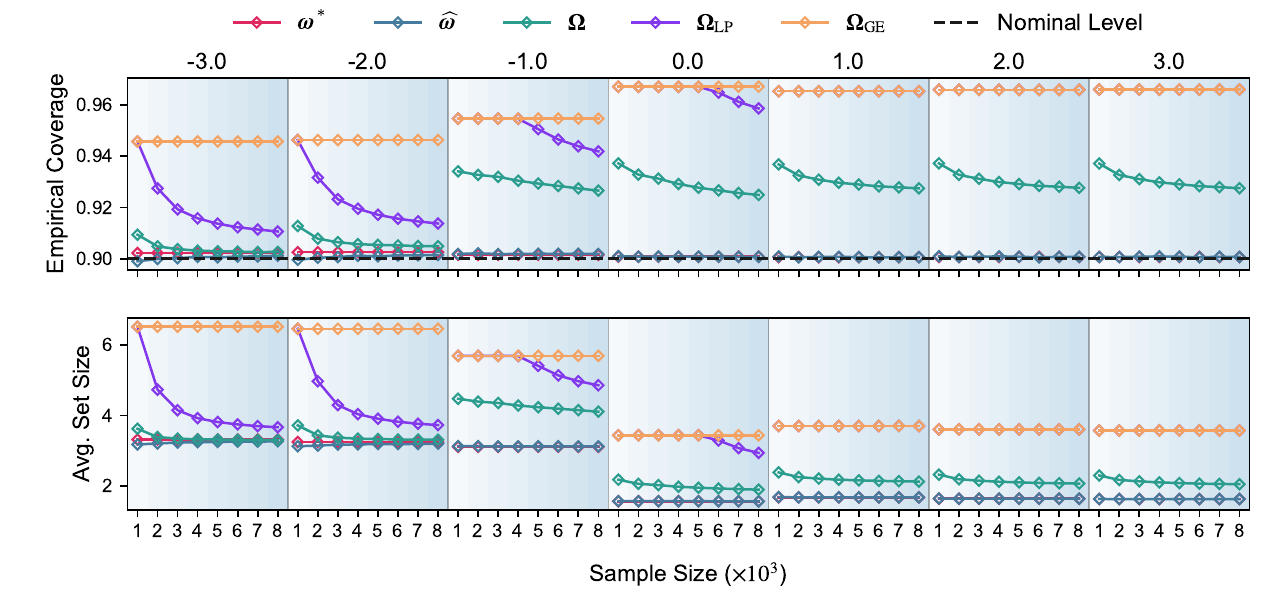}
    \caption{Evaluation of weighted conformal prediction on N24News. The top panel displays empirical coverage across varying shift intensities ($a$) and sample sizes ($n$). The bottom panel shows average prediction set sizes.}
    \label{fig:cpn24news}
\end{figure}

\paragraph{MaC enhances efficiency while maintaining reliability.} 
Figure \ref{fig:cpn24news} illustrates the evaluation metrics for the N24News benchmark. Among all methods, $\wtsrg$ consistently maintains empirical coverage above the nominal 90\% level. While $\wtsrg$ typically yields lower coverage than the overly conservative $\wtsrg\LP$ and $\wtsrg\GE$ regions, it remains strictly valid. Notably, the point estimate $\wh\bomega$ suffers from undercoverage under extreme shifts ($a \in \{-3, -2\}$), thereby losing statistical validity.

Regarding efficiency, the Oracle specification $\bomega^*$ provides the theoretical lower bound for prediction set size. $\wtsrg$ maintains a stable trajectory that closely tracks the Oracle performance, whereas $\wtsrg\LP$ and $\wtsrg\GE$ exhibit substantial efficiency loss because their feasible regions include weight combinations prohibited by the matrix constraints. We further observe that while $\wh\bomega$ may produce smaller prediction sets in certain label shift settings, these results are unreliable as they coincide with a failure to maintain nominal coverage. Similar trends are observed across all other benchmarks, as detailed in Appendix \ref{app:cpres}.

\subsubsection{Evaluation on real-world data (nuImages)}\label{sec:nuImages_res}

To evaluate real-world robustness, we test MaC-LP on the nuImages dataset \citep{caesar2020nuscenes} by constructing a domain adaptation scenario, treating data from Singapore as the source domain and images captured in Boston as the target domain (details in Appendix \ref{app:nuImages_details}). This geographic shift presents a complex environment where the label shift assumption may be violated, allowing us to evaluate the resilience of MaC-LP in the presence of naturalistic distribution shifts. 

In the upper left panel of Figure \ref{fig:nuImages}, GE still produces much wider confidence intervals for importance weights especially when sample size is small, where the uncertainty is larger. Over the all sample sizes, $\wtsrg$ is consistently smaller than $\wtsrg\LP$ as shown in the lower left panel of Figure \ref{fig:nuImages}.  

For the conformal prediction task, we use the 90\% nominal prediction level. From the upper right panel of Figure \ref{fig:nuImages}, we can see that even applying ``true'' importance weights, the coverage probability is still under 90\%, indicating much more complicated data structure and much more noise in the real-world dataset than that in the benchmarks. However, $\wtsrg$, $\wtsrg\LP$, and $\wtsrg\GE$ achieve coverage probability above the nominal level. In the lower right panel of Figure \ref{fig:nuImages}, $\wtsrg$ provides the best balance between prespecified coverage level and the prediction set size. Notably, applying the point estimation, $\wh\bomega$, remains to have deteriorative performance. Taken together, in this complex real-world data, MaC-LP shows the robustness, producing satisfactory results.

%\setlength{\intextsep}{-16pt}
%\begin{wrapfigure}{r}{0.45\textwidth}
 \begin{figure}[!htbp]    \centering\includegraphics[width=0.55\linewidth]{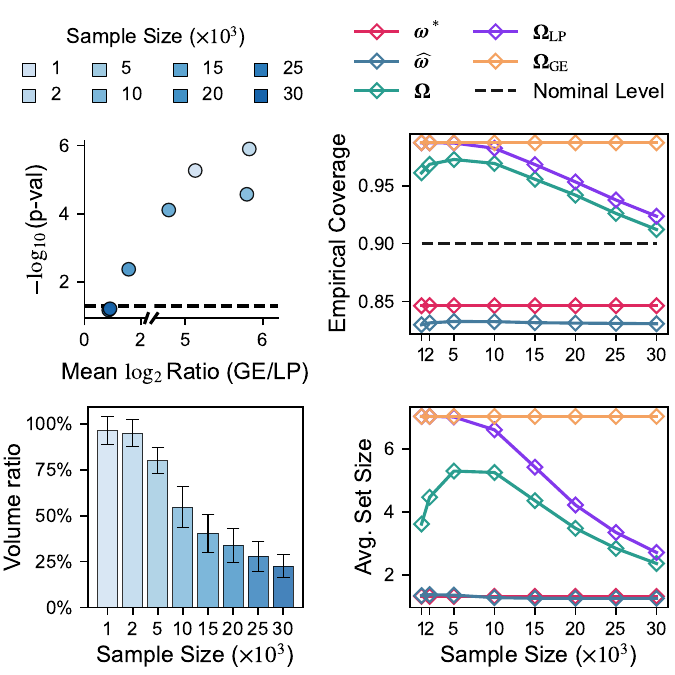}
        \caption{nuImages. (Upper left) Volcano plot of the interval length ratios (GE vs. LP) and their corresponding $t$-test significance levels. (Lower left) Volume ratio of $\wtsrg$ to $\wtsrg\LP$ across varying sample sizes. (Upper right) Empirical coverage across varying sample sizes ($n$). (Lower right) Average prediction set sizes.}
        \label{fig:nuImages}
 \end{figure}   
%\end{wrapfigure}

\section{Discussion}
In this work, we propose \textit{Matrix Constraints Linear Programming} (MaC-LP) for label shift, a novel finite-sample method for uncertainty-aware inference. Specifically, MaC captures the most refined finite-sample valid confidence regions
given the elementwise
confidence intervals of $\C^*$ and $\q^*$.
%the confusion matrix and the label proportion vector.
Our experimental results in conformal prediction highlight that MaC-LP is practically effective in addressing uncertainty quantification problems.
Furthermore, the MaC construction in Theorem \ref{thm:lp} provides an explicit convex representation of the confidence region $\wtsrg$ without requiring point estimates of importance weights. This convexity allows \text{MaC} to be integrated into domain adaptation tasks as a convex optimization problem, bypassing LP-related overhead while maintaining precision. 

Overall, \text{MaC-LP} enriches the label shift toolbox by providing statistical guarantees. 
Further discussions regarding classifier robustness, computational complexity, and scalability for large number of classes are provided in Appendix~\ref{app:extended_discussion}.
    
\clearpage
\bibliographystyle{plainnat}
\bibliography{lbsftRefs}

@inproceedings{caesar2020nuscenes,
  title={nuscenes: A multimodal dataset for autonomous driving},
  author={Caesar, Holger and Bankiti, Varun and Lang, Alex H and Vora, Sourabh and Liong, Venice Erin and Xu, Qiang and Krishnan, Anush and Pan, Yu and Baldan, Giancarlo and Beijbom, Oscar},
  booktitle={Proceedings of the IEEE/CVF conference on computer vision and pattern recognition},
  pages={11621--11631},
  year={2020}
}

@inproceedings{tian2023elsa,
	title={{ELSA}: Efficient label shift adaptation through the lens of semiparametric models},
	author={Tian, Qinglong and Zhang, Xin and Zhao, Jiwei},
	booktitle={International Conference on Machine Learning},
	pages={34120--34142},
	year={2023},
	organization={PMLR}
}

@inproceedings{lipton2018detecting,
	title={Detecting and correcting for label shift with black box predictors},
	author={Lipton, Zachary and Wang, Yu-Xiang and Smola, Alexander},
	booktitle={International conference on machine learning},
	pages={3122--3130},
	year={2018},
	organization={PMLR}
}

@article{garg2020unified,
  title={A unified view of label shift estimation},
  author={Garg, Saurabh and Wu, Yifan and Balakrishnan, Sivaraman and Lipton, Zachary},
  journal={Advances in Neural Information Processing Systems},
  volume={33},
  pages={3290--3300},
  year={2020}
}

@article{azizzadenesheli2019regularized,
	title={Regularized learning for domain adaptation under label shifts},
	author={Azizzadenesheli, Kamyar and Liu, Anqi and Yang, Fanny and Anandkumar, Animashree},
	journal={arXiv preprint arXiv:1903.09734},
	year={2019}
}

@inproceedings{alexandari2020maximum,
	title={Maximum likelihood with bias-corrected calibration is hard-to-beat at label shift adaptation},
	author={Alexandari, Amr and Kundaje, Anshul and Shrikumar, Avanti},
	booktitle={International Conference on Machine Learning},
	pages={222--232},
	year={2020},
	organization={PMLR}
}

@article{si2023pac,
	title={{PAC} prediction sets under label shift},
	author={Si, Wenwen and Park, Sangdon and Lee, Insup and Dobriban, Edgar and Bastani, Osbert},
	journal={arXiv preprint arXiv:2310.12964},
	year={2023}
}

@article{krizhevsky2009learning,
	title={Learning multiple layers of features from tiny images},
	author={Krizhevsky, Alex and Hinton, Geoffrey and others},
	year={2009},
	publisher={Toronto, ON, Canada}
}

@article{lecun1998gradient,
	title={Gradient-based learning applied to document recognition},
	author={LeCun, Yann and Bottou, L{\'e}on and Bengio, Yoshua and Haffner, Patrick},
	journal={Proceedings of the IEEE},
	volume={86},
	number={11},
	pages={2278--2324},
	year={1998},
	publisher={Ieee}
}

@inproceedings{podkopaev2021distribution,
  title={Distribution-free uncertainty quantification for classification under label shift},
  author={Podkopaev, Aleksandr and Ramdas, Aaditya},
  booktitle={Uncertainty in artificial intelligence},
  pages={844--853},
  year={2021},
  organization={PMLR}
}

@article{tibshirani2019conformal,
  title={Conformal prediction under covariate shift},
  author={Tibshirani, Ryan J and Foygel Barber, Rina and Candes, Emmanuel and Ramdas, Aaditya},
  journal={Advances in neural information processing systems},
  volume={32},
  year={2019}
}

@inproceedings{vovk2012conditional,
  title={Conditional validity of inductive conformal predictors},
  author={Vovk, Vladimir},
  booktitle={Asian conference on machine learning},
  pages={475--490},
  year={2012},
  organization={PMLR}
}

@article{park2021pac,
  title={{PAC} prediction sets under covariate shift},
  author={Park, Sangdon and Dobriban, Edgar and Lee, Insup and Bastani, Osbert},
  journal={arXiv preprint arXiv:2106.09848},
  year={2021}
}

@article{clopper1934use,
  title={The use of confidence or fiducial limits illustrated in the case of the binomial},
  author={Clopper, Charles J and Pearson, Egon S},
  journal={Biometrika},
  volume={26},
  number={4},
  pages={404--413},
  year={1934},
  publisher={JSTOR}
}

@article{simonyan2014very,
  title={Very deep convolutional networks for large-scale image recognition},
  author={Simonyan, Karen},
  journal={arXiv preprint arXiv:1409.1556},
  year={2014}
}

@inproceedings{he2016deep,
  title={Deep residual learning for image recognition},
  author={He, Kaiming and Zhang, Xiangyu and Ren, Shaoqing and Sun, Jian},
  booktitle={Proceedings of the IEEE conference on computer vision and pattern recognition},
  pages={770--778},
  year={2016}
}

@inproceedings{sandler2018mobilenetv2,
  title={Mobilenetv2: Inverted residuals and linear bottlenecks},
  author={Sandler, Mark and Howard, Andrew and Zhu, Menglong and Zhmoginov, Andrey and Chen, Liang-Chieh},
  booktitle={Proceedings of the IEEE conference on computer vision and pattern recognition},
  pages={4510--4520},
  year={2018}
}

@inproceedings{he2016identity,
  title={Identity mappings in deep residual networks},
  author={He, Kaiming and Zhang, Xiangyu and Ren, Shaoqing and Sun, Jian},
  booktitle={Computer Vision--ECCV 2016: 14th European Conference, Amsterdam, The Netherlands, October 11--14, 2016, Proceedings, Part IV 14},
  pages={630--645},
  year={2016},
  organization={Springer}
}

@inproceedings{radosavovic2020designing,
  title={Designing network design spaces},
  author={Radosavovic, Ilija and Kosaraju, Raj Prateek and Girshick, Ross and He, Kaiming and Doll{\'a}r, Piotr},
  booktitle={Proceedings of the IEEE/CVF conference on computer vision and pattern recognition},
  pages={10428--10436},
  year={2020}
}

@article{valiant1984theory,
  title={A theory of the learnable},
  author={Valiant, Leslie G},
  journal={Communications of the ACM},
  volume={27},
  number={11},
  pages={1134--1142},
  year={1984},
  publisher={ACM New York, NY, USA}
}

@book{vovk2005algorithmic,
  title={Algorithmic learning in a random world},
  author={Vovk, Vladimir and Gammerman, Alexander and Shafer, Glenn},
  volume={29},
  year={2005},
  publisher={Springer}
}

@article{liu2019roberta,
    title = {RoBERTa: A Robustly Optimized BERT Pretraining Approach},
    author = {Yinhan Liu and Myle Ott and Naman Goyal and Jingfei Du and
              Mandar Joshi and Danqi Chen and Omer Levy and Mike Lewis and
              Luke Zettlemoyer and Veselin Stoyanov},
    journal={arXiv preprint arXiv:1907.11692},
    year = {2019},
}

@inproceedings{devlin2019bert,
  title={{BERT}: Pre-training of deep bidirectional transformers for language understanding},
  author={Devlin, Jacob and Chang, Ming-Wei and Lee, Kenton and Toutanova, Kristina},
  booktitle={Proceedings of the 2019 conference of the North American chapter of the association for computational linguistics: human language technologies, volume 1 (long and short papers)},
  pages={4171--4186},
  year={2019}
}

@article{dosovitskiy2020image,
  title={An image is worth 16x16 words: Transformers for image recognition at scale},
  author={Dosovitskiy, Alexey},
  journal={arXiv preprint arXiv:2010.11929},
  year={2020}
}

@article{zhang2015character,
  title={Character-level Convolutional Networks for Text Classification},
  author={Zhang, Xiang and Zhao, Junbo and LeCun, Yann},
  journal={Advances in Neural Information Processing Systems},
  volume={28},
  year={2015}
}

@inproceedings{wang2022n24news,
  author    = {Wang, Zhen  and  Shan, Xu  and  Zhang, Xiangxie  and  Yang, Jie},
  title     = {N24News: A New Dataset for Multimodal News Classification},
  booktitle      = {Proceedings of the Language Resources and Evaluation Conference},
  month          = {June},
  year           = {2022},
  address        = {Marseille, France},
  publisher      = {European Language Resources Association},
  pages     = {6768--6775},
  url       = {https://aclanthology.org/2022.lrec-1.729}
}

@article{wu2024online,
  title={Online feature updates improve online (generalized) label shift adaptation},
  author={Wu, Ruihan and Datta, Siddhartha and Su, Yi and Baby, Dheeraj and Wang, Yu-Xiang and Weinberger, Kilian Q},
  journal={Advances in Neural Information Processing Systems},
  volume={37},
  pages={106924--106954},
  year={2024}
}

@article{wu2025addressing,
  title={Addressing label shift in distributed learning via entropy regularization},
  author={Wu, Zhiyuan and Choi, Changkyu and Cao, Xiangcheng and Cevher, Volkan and Ramezani-Kebrya, Ali},
  journal={arXiv preprint arXiv:2502.02544},
  year={2025}
}

@article{masserano2024classification,
  title={Classification under nuisance parameters and generalized label shift in likelihood-free inference},
  author={Masserano, Luca and Shen, Alex and Doro, Michele and Dorigo, Tommaso and Izbicki, Rafael and Lee, Ann B},
  journal={arXiv preprint arXiv:2402.05330},
  year={2024}
}

@inproceedings{
xu2025heterogeneous,
title={Heterogeneous Label Shift: Theory and Algorithm},
author={Chao Xu and Xijia Tang and Chenping Hou},
booktitle={Forty-second International Conference on Machine Learning},
year={2025},
url={https://openreview.net/forum?id=BY3SSEVcjV}
}

@article{zec2024overcoming,
  title={Overcoming label shift in targeted federated learning},
  author={Zec, Edvin Listo and Breitholtz, Adam and Johansson, Fredrik D},
  journal={arXiv preprint arXiv:2411.03799},
  year={2024}
}

@article{wu2021online,
  title={Online adaptation to label distribution shift},
  author={Wu, Ruihan and Guo, Chuan and Su, Yi and Weinberger, Kilian Q},
  journal={Advances in Neural Information Processing Systems},
  volume={34},
  pages={11340--11351},
  year={2021}
}

@book{neumaier1990interval,
  title={Interval methods for systems of equations},
  author={Neumaier, Arnold},
  number={37},
  year={1990},
  publisher={Cambridge university press}
}

@book{moore2009introduction,
  title={Introduction to interval analysis},
  author={Moore, Ramon E and Kearfott, R Baker and Cloud, Michael J},
  year={2009},
  publisher={SIAM}
}

@inproceedings{plassier2023conformal,
  title={Conformal prediction for federated uncertainty quantification under label shift},
  author={Plassier, Vincent and Makni, Mehdi and Rubashevskii, Aleksandr and Moulines, Eric and Panov, Maxim},
  booktitle={International Conference on Machine Learning},
  pages={27907--27947},
  year={2023},
  organization={PMLR}
}

@article{gibbs2021adaptive,
  title={Adaptive conformal inference under distribution shift},
  author={Gibbs, Isaac and Candes, Emmanuel},
  journal={Advances in Neural Information Processing Systems},
  volume={34},
  pages={1660--1672},
  year={2021}
}

@article{blyth1983binomial,
  title={Binomial confidence intervals},
  author={Blyth, Colin R and Still, Harold A},
  journal={Journal of the American Statistical Association},
  volume={78},
  number={381},
  pages={108--116},
  year={1983},
  publisher={Taylor \& Francis}
}

@article{casella1986refining,
  title={Refining binomial confidence intervals},
  author={Casella, George},
  journal={Canadian Journal of Statistics},
  volume={14},
  number={2},
  pages={113--129},
  year={1986},
  publisher={Wiley Online Library}
}

@article{blaker2000confidence,
  title={Confidence curves and improved exact confidence intervals for discrete distributions},
  author={Blaker, Helge},
  journal={Canadian Journal of Statistics},
  volume={28},
  number={4},
  pages={783--798},
  year={2000},
  publisher={Wiley Online Library}
}

@article{brown2001interval,
  title={Interval estimation for a binomial proportion},
  author={Brown, Lawrence D and Cai, T Tony and DasGupta, Anirban},
  journal={Statistical science},
  volume={16},
  number={2},
  pages={101--133},
  year={2001},
  publisher={Institute of Mathematical Statistics}
}

@article{thulin2014cost,
  title={The cost of using exact confidence intervals for a binomial proportion},
   volume={8},
   ISSN={1935-7524},
   DOI={10.1214/14-ejs909},
   number={1},
   journal={Electronic Journal of Statistics},
   publisher={Institute of Mathematical Statistics},
   author={Thulin, Måns},
   year={2014},
   month=jan
}

%%%%%%%%%%%%%%%%%%%%%%%%%%%%%%%%%%%%%%%%%%%%%%%%%%%%%%%%%%%%%%%%%%%%%%%%%%%%%%%
%%%%%%%%%%%%%%%%%%%%%%%%%%%%%%%%%%%%%%%%%%%%%%%%%%%%%%%%%%%%%%%%%%%%%%%%%%%%%%%
% APPENDIX
%%%%%%%%%%%%%%%%%%%%%%%%%%%%%%%%%%%%%%%%%%%%%%%%%%%%%%%%%%%%%%%%%%%%%%%%%%%%%%%
%%%%%%%%%%%%%%%%%%%%%%%%%%%%%%%%%%%%%%%%%%%%%%%%%%%%%%%%%%%%%%%%%%%%%%%%%%%%%%%

\clearpage
\appendix
\renewcommand{\thetable}{S\arabic{table}}
\setcounter{table}{0}
\renewcommand{\thefigure}{S\arabic{figure}}
\setcounter{figure}{0}

\section{Extended discussion}
\label{app:extended_discussion}
\paragraph{Robustness to classifier quality and calibration.}
A significant advantage of \text{MaC-LP} is its resilience to the performance of the underlying black-box classifier. As illustrated in Figure~\ref{fig:ece} and Table \ref{tab:acc}, our framework maintains superior performance regardless of whether the base classifiers are well-calibrated or possess high accuracy. This decoupling suggests that \text{MaC-LP} effectively extracts importance weight information even from suboptimal classifiers, providing a reliable framework in practical scenarios where perfect calibration is unattainable.

\paragraph{Coverage guarantees \textit{vs.} point predictions.}
Standard domain adaptation (DA) techniques primarily focus on improving point prediction accuracy, often neglecting the quantification of predictive uncertainty. In contrast, our work centers on providing $1-\alpha$ coverage guarantees. Even when the predictive model is limited in capacity, our constructed confidence regions ensure that the true label is contained within the set with prespecified probability. This statistical safety guarantee provides more information of the $Y|\rmX$ distribution in the target domain than simple expectation-based estimates, $\mathbb{E}_Q(Y|\rmX)$.

\paragraph{Computational complexity and parallelization for $\wtsrg\LP$.}
While the theoretical complexity for solving the linear program for $\wtsrg\LP$ is $\mathcal{O}(K^4)$, where $K$ is the number of classes, this does not represent a practical bottleneck for standard classification tasks because the optimization process is highly amenable to parallel computing. Furthermore, the convexity of $\wtsrg$ ensures that it can be handled by a wide range of efficient off-the-shelf solvers.

\paragraph{Scalability to large class dimensions.}
For tasks involving a massive number of labels, the whole model scalability may have negative impact on tractability. In such instances, we recommend a hierarchical approach to decompose the label space, for example, incorporating with a decision tree model. While we treat the optimization of these hierarchies as an extension for future study, the current conclusion of \text{MaC-LP} remains highly effective for the experimented datasets.

\section{Broader impacts} \label{app:impact}
This work establishes a mathematical framework to achieve operational
efficiency under principled safety in machine learning. The primary impact is
improving the practical utility of machine learning methods under
label shift. By delivering the tighter safety bounds, MaC-LP 
ensures that systems remain decisive and efficient, providing rigorous uncertainty quantification
and allowing reliable data-driven actions in finite-sample settings.

\clearpage
\section{Workflow of MaC-LP}\label{sec:workflow}
%\vspace*{22pt}
\begin{figure}[ht]
    \centering
    \includegraphics[width=.992\linewidth]{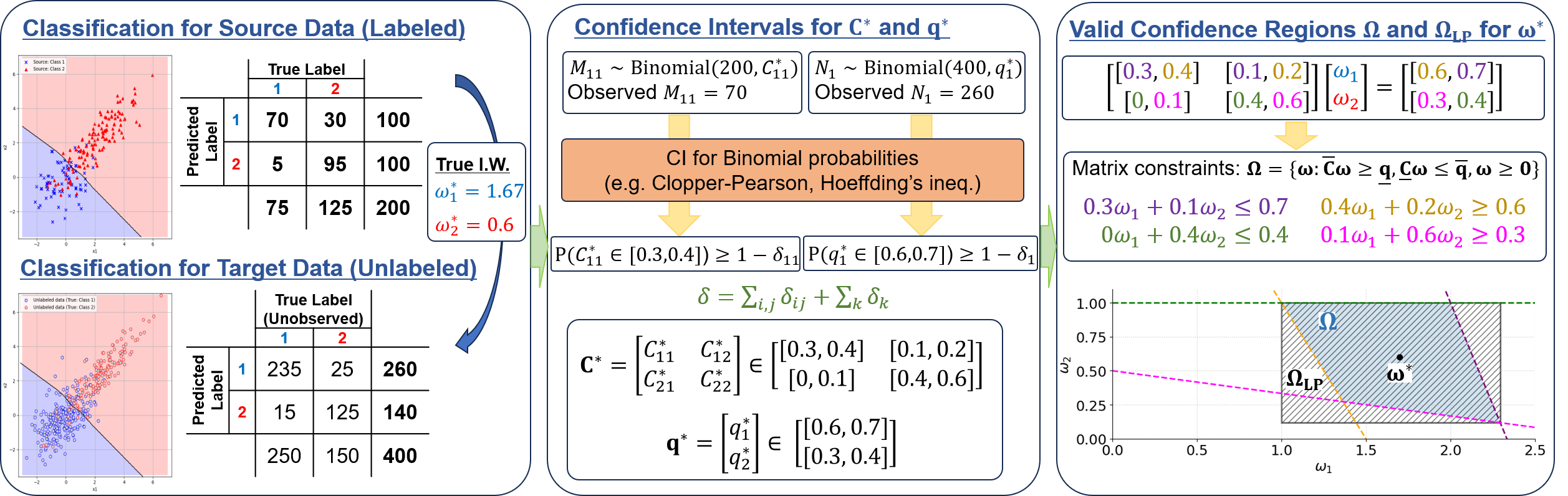}
    \caption{
    Workflow of MaC-LP in a binary classification toy example: (1) apply a classifier to both source and target datasets; (2) obtain elementwise confidence intervals for $\C^*$ and $\q^*$; (3) construct confidence regions $\wtsrg\FS$ and $\wtsrg\LP$. We can observe that $\wtsrg\FS$ and $\wtsrg\LP$ capture the true importance weigths $\bomega^*$.}
    \label{fig:workflow}
\end{figure}
\section{Details of related work}\label{sec:related}
\paragraph{Estimation for importance weights under label shift.}\label{sec:review-est}
Although our work focuses on constructing finite-sample confidence regions for the importance weights, we briefly review existing methods for point estimation of $\bomega^*$ under label shift.
Recall that we are given finite samples $\srcdata = \{(\vx_s, y_s)\}_{s=1}^{m}$ and $\tgtdata = \{\vx_{m+t}\}_{t=1}^{n}$ from source and target domain, respectively. 
In the source domain, for $i,j\in[K]$, let
$M_{ij}=\sum_{s=1}^m I\{g(\vx_s)=i,\ y_s=j\}$,
where $I\{\cdot\}$ is the indicator function. 
Denote $\rmM = (M_{ij})_{i,j\in[K]}$ which is  a $K\times K$ matrix. Note that $\sum_{i=1}^K\sum_{j=1}^K M_{ij}=m$.
Then
$\vect(\rmM)\,|\, \C^*\sim \MultiN\{m,\vect(\C^*)\}$,
where $\vect$ denotes the vectorization of a matrix.
In the target domain, for $k\in[K]$, let
$N_k=\sum_{t=1}^n I\{g(\vx_{m+t})=k\}$,
and denote $\rmN=(N_1,\ldots,N_K)^\top$.
Similarly, we have $\sum_{k=1}^K N_k=n$ and $\rmN\,|\, \q^*\sim \MultiN(n,\q^*)$.
Motivated by the identity $\C^*\bomega^*=\q^*$ derived in Section~\ref{sec:confusion-matrix-approach}, a natural approach to estimate $\bomega^*$ is via plug-in solutions.
When $\C^*$ is invertible, solving the linear system
$\C^* \bomega^* = \q^*$ yields a valid estimator of the importance weights.
In practice, since $\C^*$ and $\q^*$ are unknown, one replaces them by their sample analogues
$\wh\C=(\wh C_{ij})_{i,j\in[K]}$ and $\wh\q=(\wh q_1,\ldots,\wh q_K)^\top$,
where $\wh C_{ij}=M_{ij}/m$ and $\wh q_k=N_k/n$.
The resulting estimator $\wh\bomega_{\mathrm{BBSE}}=\wh\C^{-1}\wh\q$ is referred to as black box shift estimation (BBSE) \citep{lipton2018detecting}.

Regularized learning of label shift (RLLS) \citep{azizzadenesheli2019regularized} modifies BBSE by stabilizing the plug-in solution via regularization.
Specifically, RLLS computes $\wh\bomega_{\mathrm{RLLS}}$ as a minimizer of
\bse
\wh\bomega_{\mathrm{RLLS}}
=\argmin_{\bomega}\ \|\wh\C\bomega-\wh\q\|_2^2 + \lambda \|\bomega-\1_K\|_2^2,
\ese
where $\1_K$ is the length-$K$ vector of ones.
The regularizer $\|\bomega-\1_K\|_2^2$ shrinks $\bomega$ toward $\1_K$, corresponding to no label shift.
In some implementations of BBSE and RLLS, additional constraints such as $\bomega\ge 0$ and a normalization condition (ensuring that the implied target label distribution sums to one) may also be imposed.

More generally, moment-based methods can be formulated by matching reweighted source moments to target moments.
Efficient label shift adaptation (ELSA) \citep{tian2023elsa} estimates $\bomega^*$ by solving the moment-matching equation
\bse
\frac{1}{m}\sum_{s=1}^m h(\vx_s)\wh\omega_{\mathrm{ELSA},y_s}
=
\frac{1}{n}\sum_{t=1}^n h(\vx_{m+t}),
\ese
equivalently,
\bse
\sum_{j=1}^K\frac{1}{m}\sum_{s=1}^m h(\vx_s)\1\{y_s=j\}\wh\omega_{\mathrm{ELSA},j}
=
\frac{1}{n}\sum_{t=1}^n h(\vx_{m+t}).
\ese
Moreover, ELSA constructs an optimal choice of the moment function $h$ (within an appropriate class) so that the resulting estimator $\wh\bomega_{\mathrm{ELSA}}$ attains the semiparametric efficiency bound for $\bomega^*$ under label shift.

Complementary to these moment-based approaches, likelihood-based methods estimate the target label distribution directly under the label shift model.
Maximum likelihood label shift (MLLS) uses a well-calibrated soft-max classifier to obtain an estimate $\wh p(y)$ of the source label distribution and then estimates the target label distribution $\wh q(y)$ by maximum likelihood, typically via an EM algorithm \citep{alexandari2020maximum}.
The resulting estimator for $\bomega^*$ is $\wh\bomega_{\mathrm{MLLS}}=(\wh\omega_{\mathrm{MLLS},1},\ldots,\wh\omega_{\mathrm{MLLS},K})^\top$, where $\wh\omega_{\mathrm{MLLS},y}=\wh q(y)/\wh p(y)$ for $y\in[K]$.

\paragraph{Recent works on label shift in extended settings.}
The label shift setting continues to receive active attention.
In particular, recent work incorporates label shift into online, federated/distributed, and generalized modeling settings.
Recent work studies online label shift, where unlabeled target samples arrive sequentially and the learner must update adaptation or calibration procedures over time \citep{wu2021online, wu2024online}.
Label shift has also been considered in federated learning, where clients may exhibit heterogeneous label distributions and the goal is to improve performance in the target domain \citep{zec2024overcoming}.
Label shift has further been studied in multi-node distributed learning, where each node may have access to both source and target data under heterogeneous label distributions \citep{wu2025addressing}; for example, \citet{wu2025addressing} use an entropy-based regularizer to stabilize importance-weight estimation across nodes.
\citet{masserano2024classification} introduce a generalized label shift model of the form $p(\vx\mid y,\nu)=q(\vx\mid y,\nu)$, where $\nu$ denotes nuisance parameters that affect the class-conditional distribution.
Label shift has also been studied under structured heterogeneity, including settings where the source and target feature representations differ (e.g., cross-lingual text) but the label-shift structure is leveraged for adaptation \citep{xu2025heterogeneous}.
When these settings can be reduced to (or approximated by) the standard label-shift identity $\C^*\bomega^*=\q^*$, our framework provides finite-sample confidence regions for $\bomega^*$ as an alternative to point estimation in such pipelines.

\paragraph{Uncertainty propagation through linear systems.}
A common approach to finite-sample uncertainty quantification for parameters defined by linear identities is to treat the estimated coefficients and right-hand side as interval-valued objects and propagate these intervals through interval linear system solvers.
In this literature, interval Gaussian elimination is a standard procedure for bounding the solution set of $\rmA\vx=\vb$ when $(\rmA,\vb)$ are only known within uncertainty sets, and related alternatives include Krawczyk method \citep{neumaier1990interval, moore2009introduction}.
However, interval Gaussian elimination repeatedly applies subtraction and division that reverse endpoints, which yields overly conservative bounds for $\vx$ \citep{moore2009introduction}.
In the label shift setting, \citet{si2023pac} adopt this interval Gaussian elimination procedure by propagating binomial confidence intervals through the updates of the procedure to obtain confidence intervals for importance weights; details are provided in Appendix~\ref{sec:gaussian}.
These developments motivate propagating uncertainty by optimizing directly over the underlying constraints, rather than through interval Gaussian elimination or direct inversion.

\paragraph{Finite-sample prediction under distribution shift.}
Finite-sample prediction sets are a common downstream objective for importance weights under distribution shift, since uncertainty-aware decisions often require coverage guarantees that hold in the target domain.

Conformal prediction provides a general route to constructing prediction sets with finite-sample coverage guarantees \citep{vovk2005algorithmic}, and a growing literature studies how to modify conformal procedures when the test distribution differs from the training or calibration distribution.
For example, \citet{tibshirani2019conformal} develop weighted conformal prediction for covariate shift by incorporating importance weights into the conformal calibration step, while subsequent work considers related shift structures such as label shift and associated weighting-based corrections \citep{podkopaev2021distribution, plassier2023conformal}, as well as adaptive conformal inference that updates calibration online under sequential distribution shift \citep{gibbs2021adaptive}.

Complementary to conformal prediction, probably approximately correct (PAC) prediction sets provide coverage guarantees that hold with high probability conditional on the calibration sample, and have been extended to covariate shift and label shift settings by explicitly incorporating importance weights to ensure target-domain reliability \citep{park2021pac, si2023pac}.
These connections emphasize that tight, finite-sample-valid uncertainty quantification for importance weights can have a direct impact on the informativeness of prediction sets under distribution shift.

\section{Confidence intervals for binomial probability}
\label{sec:ci-binomial}
We briefly review standard methods for constructing $(1-\gamma)$ confidence intervals for binomial probabilities.
Let $B\sim \mathrm{Binomial}(n,\pi^*)$.
An exact $(1-\gamma)$ confidence interval for $\pi^*$ is obtained by inverting level-$\gamma$ binomial tests, yielding the Clopper--Pearson interval:
\bse
\ul \pi
&=&\inf\Big\{\pi\in[0,1]: \pr\big(\mathrm{Binomial}(n,\pi)\ge B\big)\le \gamma_1\Big\},\\
\ol \pi
&=&\sup\Big\{\pi\in[0,1]: \pr\big(\mathrm{Binomial}(n,\pi)\le B\big)\le \gamma_2\Big\},
\ese
where $\gamma_{1}+\gamma_{2}=\gamma$.
This yields an exact interval $[\ul \pi,\ol \pi]$ satisfying $\pr\{\pi^*\in [\ul \pi,\ol \pi]\}\ge 1-\gamma$.
Equivalently, letting $Q_{\alpha}(X)$ denote the $\alpha$-quantile of a random variable $X$,
\bse
\ul\pi&=&Q_{\gamma_{1}}\{\mathrm{Beta}(B,n-B+1)\},\\
\ol\pi&=&Q_{1-\gamma_2}\{\mathrm{Beta}(B+1,n-B)\}.
\ese
When $1\le B\le n-1$, it is conventional to choose $\gamma_{1}=\gamma_{2}=\gamma/2$.
When $B=0$, one sets $\gamma_{1}=0$ and $\gamma_{2}=\gamma$, and when $B=n$, one sets $\gamma_{1}=\gamma$ and $\gamma_{2}=0$.

Beyond the Clopper--Pearson interval, several exact refinements aim to reduce conservativeness while maintaining guaranteed coverage.
The Blyth--Still--Casella interval \citep{blyth1983binomial, casella1986refining} chooses an optimal tail split $(\gamma_{1},\gamma_{2})$ with $\gamma_{1}+\gamma_{2}=\gamma$ to yield a shorter confidence interval.
The Blaker interval \citep{blaker2000confidence} constructs an exact interval using an acceptability function and is often shorter than Clopper--Pearson; in some cases, the resulting set can be a union of disjoint intervals.

There are also $(1-\gamma)$ confidence intervals based on concentration inequalities for the sample proportion $\wh\pi=B/n$.
These bounds yield algebraically simple intervals, though they may be conservative in finite samples.
First, since $\Var(\wh\pi)=\pi^*(1-\pi^*)/n\le 1/(4n)$, Chebyshev's inequality gives, for any $t>0$,
\bse
\pr(|\wh\pi-\pi^*|\ge t)\le \frac{\Var(\wh\pi)}{t^2}
\le \frac{1}{4n t^2}.
\ese
Thus taking $t=1/\sqrt{4n\gamma}$ yields the interval
\bse
\pi^*\in[\wh\pi-1/\sqrt{4n\gamma},\ \wh\pi+1/\sqrt{4n\gamma}]\cap[0,1]
\ese
with probability at least $1-\gamma$.

Next, since $\wh\pi$ is the average of $n$ i.i.d.\ Bernoulli$(\pi^*)$ variables, Hoeffding's inequality gives, for any $t>0$,
\bse
\pr(|\wh\pi-\pi^*|\ge t)\le 2\exp(-2nt^2).
\ese
Setting $2\exp(-2nt^2)=\gamma$ yields the explicit $(1-\gamma)$ interval
\bse
\pi^*\in\Big[\wh\pi-\sqrt{\frac{\log(2/\gamma)}{2n}},\ \wh\pi+\sqrt{\frac{\log(2/\gamma)}{2n}}\Big]\cap[0,1].
\ese

Lastly, since $B$ is a sum of $n$ Bernoulli random variables, which are bounded,
Bernstein's inequality gives a sharper confidence interval than Hoeffding's inequality with a variance term in the bound. To be specific, for any $t>0$,
\bse
\pr(|\wh\pi-\pi^*|\ge t)\le
2\exp\!\Big\{\!-\frac{nt^2}{2\pi^*(1-\pi^*)+2t/3}\Big\},
\ese
One can further obtain a closed-form bound by upper bounding $\pi^*(1-\pi^*)\le 1/4$, giving
\bse
\pr\{|\wh\pi-\pi^*|\ge t\}\le
2\exp\!\Big(\!-\frac{nt^2}{1/2+2t/3}\Big),
\ese
and then solving for $t$ such that the right-hand side equals $\gamma$.
For comprehensive studies of interval estimation for binomial proportions, see \citet{brown2001interval} and \citet{thulin2014cost}.

\section{Proofs} \label{sec:proofs}
%\subsection{Proof of Proposition~\ref{prop:Comegaq}}\label{sec:Comegaq}
\subsection{Proof of \texorpdfstring{$\C^*\bomega^*=\q^*$}{Proof of C*omega*=q*}}
\label{sec:Comegaq}
\begin{proof}
For each $i\in[K]$, we obtain that
\bse
\sum_{j=1}^K C_{ij}^*\omega_j^*&=& \sum_{j=1}^K \pr_{(\rmX,Y)\sim P}\{g(\rmX)=i, Y=j\}\,
\frac{\pr_{(\rmX,Y)\sim Q}(Y=j)}{\pr_{(\rmX,Y)\sim P}(Y=j)}\\
&=& \sum_{j=1}^K \pr_{(\rmX,Y)\sim P}\{g(\rmX)=i \mid Y=j\}\,\pr_{(\rmX,Y)\sim Q}(Y=j)\\
&=& \sum_{j=1}^K \pr_{(\rmX,Y)\sim Q}\{g(\rmX)=i \mid Y=j\}\,\pr_{(\rmX,Y)\sim Q}(Y=j)\\
&=& \pr_{(\rmX,Y)\sim Q}\{g(\rmX)=i\}
= q_i^*,
\ese
which implies $\C^*\bomega^*=\q^*$.
\end{proof}
\subsection{Proof of Proposition~\ref{prop:unionlaw}}\label{sec:unionlaw}
\begin{proof}
By the union bound, we obtain that
\bse
\pr(\bomega^* \not\in \wtsrg_0) &=& \pr(\C^* \not \in \crgn \textrm{ or }\q^* \not \in \qrgn)\\
&\le& \sum_{i, j \in[K]} \pr(C_{ij}^* \not \in [\ul C_{ij}, \ol C_{ij}]) + \sum_{k \in[K]} \pr(q_{k}^* \not \in [\ul q_k, \ol q_k])\\
&\le& \sum_{i, j \in[K]} \delta_{ij} + \sum_{k \in[K]} \delta_{k} = \delta.
\ese 
\end{proof}

\subsection{Proof of Theorem~\ref{thm:lp}}\label{sec:lp}

\begin{proof}
Suppose that $\C \in \crgn$, $\q \in \qrgn$, and that $\bomega\in\wtsrg_0$ satisfies $\C\bomega=\q$ and
$\bomega\ge0$. The linear equation $\C\bomega=\q$ is equivalent to
\bse
\sum_{j=1}^K C_{ij}\bomega_j = q_i.
\ese
Since $\omega_j\ge0$, we get that
\bse
\sum_{j=1}^K \ol C_{ij}\omega_j \ge \sum_{j=1}^K C_{ij}\omega_j = q_i \ge \ul q_i,\\
\sum_{j=1}^K \ul C_{ij}\omega_j \le \sum_{j=1}^K C_{ij}\omega_j = q_i \le \ol q_i,
\ese
which implies that $\bomega \in \wtsrg\FS$.

Now, we prove the other direction. Suppose that $\bomega \in \wtsrg\FS$. Then for each $i\in[K]$,
we can apply the following procedure. If
$\sum_{j=1}^K \ol C_{ij}\omega_j \le \ol q_i$ for any $i\in[K]$,
take $C_i^\top = (\ol C_{i1},\ldots,\ol C_{iK})$ and
$q_i=\sum_{j=1}^K \ol C_{ij}\omega_j\in [\ul q_i,\ol q_i]$. Otherwise, assume $\sum_{j=1}^K \ol C_{ij}\omega_j > \ol q_i$. For $l=0,1,...,K$, define
\bse
q_i(l)=\sum_{j=1}^l \ul C_{ij}\omega_j + \sum_{j=l+1}^K \ol C_{ij}\omega_j.
\ese
Then $q_i(l)$ is a decreasing function of $l$. Also,
$q_i(K)=\sum_{j=1}^K \ul C_{ij}\omega_j \le \ol q_i$ by the condition, and $q_i(0)=\sum_{j=1}^K \ol C_{ij}\omega_j > \ol q_i$ by the assumption. Thus, we can find $l_0$
such that $q_i(l_0-1)> \ol q_i \ge q_i(l_0)$. 
Note that $\omega_{l_0}> 0$ since $0<q_i(l_0-1)- q_i(l_0)=(\ol C_{i,l_0}-\ul C_{i,l_0})\omega_{l_0}$.
Let
$C_i^\top =
(\ol C_{i1},\ldots,\ol C_{i,l_0-1},\wt C_{i,l_0},\ul C_{i,l_0+1},\ldots,\ul C_{iK})$
and 
$q_i=\ol q_i$,
where
\bse
\wt C_{i,l_0}=\ul C_{i,l_0}+\frac{\ol q_i-q_i(l_0)}{\omega_{l_0}}.
\ese
Then $\wt C_{i,l_0}\in[\ul C_{i,l_0},\ol C_{i,l_0}]$ and $C_i^\top\bomega=q_i$.
Taking $C_i^\top$ as the $i$-th row of $\C$ and $q_i$ as the $i$-th element of $\q$, we get that
$\C\bomega=\q$, where $\C\in\crgn$ and $\q\in\qrgn$, meaning $\bomega\in\wtsrg_0$.
\end{proof}

\subsection{Proof of Corollary~\ref{cor:www}}\label{sec:www}
\begin{proof}
The first part, $\wtsrg\FS \subset \wtsrg\LP$, is trivial from its definition. For the second part,
since $\wtsrg\GE$ exists and contains all $\C^{-1}\q$ for $\C\in\crgn$ and $\q\in\qrgn$ by its construction, we have $\wtsrg\FS \subset \wtsrg\GE$ . Since $\wtsrg\LP$ is the smallest
hyperrectangle that contains $\wtsrg\FS$, we get $\wtsrg\LP \subset \wtsrg\GE$.
\end{proof}

\subsection{Proof of Theorem \ref{thm:diameter}}\label{sec:diameter}
\begin{proof}
We prove the $\ell_\infty$ bound; the $\ell_2$ bound follows by the
same argument with $\|\cdot\|_\infty$ replaced by $\|\cdot\|_2$. 

Let $\bomega=\rmA^{-1}\q$ and $\bomega'=\rmB^{-1}\q'$ in $\wtsrg\FS$,
where $\ul\C\le\rmA,\rmB\le\ol\C$ and $\ul\q\le\q,\q'\le\ol\q$. Write
$\rmA=\ul\C+\Delta$ and $\rmB=\ul\C+\Delta'$ with
$\0\le\Delta,\Delta'\le\ol\C-\ul\C$, and write $\q=\ol\q-\vd$ and
$\q'=\ol\q-\vd'$ with $\0\le\vd,\vd'\le\ol\q-\ul\q$. Then
\bse
\bomega-\bomega'&=&\rmA^{-1}(\ol\q-\vd)-\rmB^{-1}(\ol\q-\vd')\\
&=&(\rmA^{-1}-\rmB^{-1})(\ol\q-\vd')-\rmA^{-1}(\vd-\vd')\\
&=&\rmA^{-1}(\Delta'-\Delta)\rmB^{-1}(\ol\q-\vd')-\rmA^{-1}(\vd-\vd'),
\ese
where the last line holds by $\rmA^{-1}-\rmB^{-1}=\rmA^{-1}(\rmB-\rmA)\rmB^{-1}$.
Taking $\|\cdot\|_\infty$ gives
\bse
\|\bomega-\bomega'\|_\infty\le\|\rmA^{-1}\|_\infty\|\Delta'-\Delta\|_\infty\|\rmB^{-1}\|_\infty\|\ol\q-\vd'\|_\infty+\|\rmA^{-1}\|_\infty\|\vd-\vd'\|_\infty.
\ese

Since $0\le\Delta,\Delta'\le\ol\C-\ul\C$ and $0\le\vd,\vd'\le\ol\q-\ul\q$, we have
$\|\Delta'-\Delta\|_\infty\le\|\ol\C-\ul\C\|_\infty$, $\|\vd-\vd'\|_\infty\le\|\ol\q-\ul\q\|_\infty$, and $\|\ol\q-\vd'\|_\infty\le\|\ol\q\|_\infty$.
Moreover, for any $\Delta$ with $\|\Delta\|_\infty\le\|\ol\C-\ul\C\|_\infty$,
$(\ul\C+\Delta)^{-1}=(I+\ul\C^{-1}\Delta)^{-1}\ul\C^{-1}$,
and
\bse
\|(I+\ul\C^{-1}\Delta)^{-1}\|_\infty&=&\sup_{\vv\neq
  0}\frac{\|(I+\ul\C^{-1}\Delta)^{-1}\vv\|_\infty}{\|\vv\|_\infty}
=\frac{1}{\inf_{\|\vw\|_\infty=1}\|(I+\ul\C^{-1}\Delta)\vw\|_\infty}\\
&\le&
\frac{1}{1-\sup_{\|\vw\|_\infty=1}\|\ul\C^{-1}\Delta\vw\|_\infty}
\le\frac{1}{1-\|\ul\C^{-1}\|_\infty\|\Delta\|_\infty},
\ese
where we wrote $\vv=(I+\ul\C^{-1}\Delta)\vw$ and
used $\|\C^{-1}\Delta\|_\infty<1$.
Hence, 
\bse
\|(\ul\C+\Delta)^{-1}\|_\infty\le\frac{\|\ul\C^{-1}\|_\infty}{1-\|\ul\C^{-1}\|_\infty\|\Delta\|_\infty}\le\frac{\|\ul\C^{-1}\|_\infty}{1-\|\ul\C^{-1}\|_\infty\|\ol\C-\ul\C\|_\infty}.
\ese
Applying this bound to $\rmA^{-1}$ and $\rmB^{-1}$ and substituting into the previous inequality yields the stated ${\mathrm{diam}}_\infty(\wtsrg\FS)$ bound. The ${\mathrm{diam}}_2(\wtsrg\FS)$ bound follows identically by replacing $\|\cdot\|_\infty$ with $\|\cdot\|_2$ throughout.
\end{proof}

\subsection{Proof of $\|\cdot\|_2$ bound}\label{sec:L2norm}
\begin{proof}
By \eqref{eq:hoeffding}, we have
\bse
\|\ol\C-\ul\C\|_2\le K\sqrt{\frac{2\log(2/\delta')}{m}},\qquad \|\ol\q-\ul\q\|_2\le \sqrt{K}\sqrt{\frac{2\log(2/\delta')}{n}}.
\ese
Since $\ol q_k\le 1$  for $k\in[K]$, we have $\|\ol\q\|_2\le \sqrt K$.
Assume in addition that
$m$ is sufficiently large so that
$\|\ul\C^{-1}\|_2\|\ol\C-\ul\C\|_2 \le LK^{2}\sqrt{2\log(2/\delta')/m}<1/2$.
Then by Theorem \ref{thm:diameter},
\bse
{\mathrm{diam}}_2(\wtsrg\FS)
&\le&\frac{L^2K^{7/2}\sqrt{\frac{2\log(2/\delta')}{m}}}{\big(1-LK^{2}\sqrt{\frac{2\log(2/\delta')}{m}}\big)^2}
+\frac{LK^{3/2}\sqrt{\frac{2\log(2/\delta')}{n}}}{1-LK^{2}\sqrt{\frac{2\log(2/\delta')}{m}}}\\
&\le&4L^2m^{-1/2}K^{7/2}\sqrt{2\log(4K^2/\delta)}
+2Ln^{-1/2}K^{3/2}\sqrt{2\log(4K^2/\delta)}.
\ese
\end{proof}

\subsection{Proof of Theorem~\ref{thm:cp-omega0}}\label{sec:cp-omega0}
\begin{proof}
By Theorem~2 of \citet{podkopaev2021distribution}, we have
\bse
\mathbb{P}_{(\wt\rmX_0,\wt Y_0)\sim Q}\{\wt Y_0 \in F_{\mathrm{CP}}(\wt\rmX_0;\bomega^*)\} \ge 1-\alpha.
\ese
Also, if $\bomega^* \in \wtsrg_0$, we get $F_{\mathrm{CP}}(\wt\rmX_0;\bomega^*) \subset F_{\mathrm{CP}}(\wt\rmX_0;\wtsrg_0)$. Then
\bse
\mathbb{P}_{(\wt\rmX_0,\wt Y_0)\sim Q}\{\wt Y_0 \notin F_{\mathrm{CP}}(\wt\rmX_0;\wtsrg_0)\}
&=& \mathbb{P}_{(\wt\rmX_0,\wt Y_0)\sim Q}\{\bomega^* \in \wtsrg_0 \text{ and } \wt Y_0 \notin F_{\mathrm{CP}}(\wt\rmX_0;\wtsrg_0)\} \\
&& + \mathbb{P}_{(\wt\rmX_0,\wt Y_0)\sim Q}\{\bomega^* \notin \wtsrg_0 \text{ and } \wt Y_0 \notin F_{\mathrm{CP}}(\wt\rmX_0;\wtsrg_0)\} \\
&\le& \mathbb{P}_{(\wt\rmX_0,\wt Y_0)\sim Q}\{\wt Y_0 \notin F_{\mathrm{CP}}(\wt\rmX_0;\bomega^*)\} + \mathbb{P}(\bomega^* \notin \wtsrg_0) \\
&\le& \alpha + \delta.
\ese
\end{proof}

\subsection{Proof of Theorem~\ref{thm:cp-monotone-omega}}\label{sec:cp-monotone-omega}
\begin{proof}
First, \eqref{eq:tcp-omega0} implies $\tau_{\mathrm{CP}}(\wt y;\wtsrg_1)\le \tau_{\mathrm{CP}}(\wt y;\wtsrg_2)$
for all $\wt y$. Then \eqref{eq:fcp-omega0} gives the result.
\end{proof}

\section{An explicit bound obtained by Theorem \ref{thm:diameter}} \label{app:diameter}
\textbf{Theorem \ref{thm:diameter}}
\textit{Let $\ul\C\in\realNum^{K\times K}$ be invertible.
If $\|\ul\C^{-1}\|_\infty\|\ol\C-\ul\C\|_\infty<1$, then 
\bse
{\mathrm{diam}}_\infty(\wtsrg\FS)\le\frac{\|\ul\C^{-1}\|_\infty^2\|\ol\C-\ul\C\|_\infty\|\ol\q\|_\infty}{\big(1-\|\ul\C^{-1}\|_\infty\|\ol\C-\ul\C\|_\infty\big)^2}
+\frac{\|\ul\C^{-1}\|_\infty\|\ol\q-\ul\q\|_\infty}{1-\|\ul\C^{-1}\|_\infty\|\ol\C-\ul\C\|_\infty}.
\ese
If $\|\ul\C^{-1}\|_2\|\ol\C-\ul\C\|_2<1$, then 
\bse
{\mathrm{diam}}_2(\wtsrg\FS)\le\frac{\|\ul\C^{-1}\|_2^2\|\ol\C-\ul\C\|_2\|\ol\q\|_2}{\big(1-\|\ul\C^{-1}\|_2\|\ol\C-\ul\C\|_2\big)^2}
+\frac{\|\ul\C^{-1}\|_2\|\ol\q-\ul\q\|_2}{1-\|\ul\C^{-1}\|_2\|\ol\C-\ul\C\|_2}.
\ese
}

We use Hoeffding's inequality method construct confidence intervals for binomial probability, illustrating the usage of Theorem \ref{thm:diameter}.
For preparation,
let $\delta'=\delta/(K^2+K)$, and for each $i,j,k\in[K]$, let $[\ul C_{ij},\ol C_{ij}]$ and $[\ul q_k,\ol q_k]$ be $(1-\delta')$-level confidence intervals constructed with Hoeffding's inequality (Appendix~\ref{sec:ci-binomial}): if $B\sim \mathrm{Binomial}(r,\pi)$, then
\bse
\Pr\{|B/r-\pi|\le \sqrt{\frac{\log(2/\delta')}{2r}}\}\ge 1-\delta',
\ese
which gives us
\be
\ol C_{ij}-\ul C_{ij}\le \sqrt{2\log(2/\delta')/m},\qquad \ol q_k-\ul q_k\le \sqrt{2\log(2/\delta')/n}.\label{eq:hoeffding}
\ee
Assume $\ul\C$ is invertible and $\sigma_{\min}(\ul\C)\ge 1/(LK)$  for
some constant $L>0$. Then $\|\ul\C^{-1}\|_2\le LK$, and
subsequently $\|\ul\C^{-1}\|_\infty\le LK^{3/2}$.

Now we provide the $\|\cdot\|_\infty$ norm bound. By \eqref{eq:hoeffding},
\bse
\|\ol\C-\ul\C\|_\infty\le K\sqrt{2\log(2/\delta')/m},\qquad
\|\ol\q-\ul\q\|_\infty&\le& \sqrt{2\log(2/\delta')/n}.
\ese
Since $\ol q_k\le 1$ for all $k\in[K]$, we have $\|\ol\q\|_\infty\le 1$.
Assume $m$ is sufficiently large so that
$\|\ul\C^{-1}\|_\infty\|\ol\C-\ul\C\|_\infty \le LK^{5/2}\sqrt{2\log(2/\delta')/m}<1/2$.
Then by Theorem \ref{thm:diameter},
\bse
{\mathrm{diam}}_\infty(\wtsrg\FS)
&\le&\frac{L^2K^4\sqrt{2\log(2/\delta')/m}}{\big(1-LK^{5/2}\sqrt{2\log(2/\delta')/m}\big)^2}+\frac{LK^{3/2}\sqrt{2\log(2/\delta')/n}}{1-LK^{5/2}\sqrt{2\log(2/\delta')/m}}\\
&\le&4L^2m^{-1/2}K^4\sqrt{2\log(4K^2/\delta)}+2Ln^{-1/2}K^{3/2}\sqrt{2\log(4K^2/\delta)}.
\ese 

In a similar manner, we get the $\|\cdot\|_2$ norm bound under mild conditions (Appendix~\ref{sec:L2norm}):
when $LK^{2}\sqrt{2\log(2/\delta')/m}<1/2$, we have 
\bse
{\mathrm{diam}}_2(\wtsrg\FS)
\le4L^2m^{-1/2}K^{7/2}\sqrt{2\log(4K^2/\delta)}+2Ln^{-1/2}K^{3/2}\sqrt{2\log(4K^2/\delta)}.
\ese
Overall, we can observe that both ${\mathrm{diam}}_\infty(\wtsrg\FS)$ and ${\mathrm{diam}}_2(\wtsrg\FS)$ have $O\{(m^{-1/2}K^4+n^{-1/2}K^{3/2})\sqrt{\log(K/\delta)}\}$ and $O\{(m^{-1/2}K^{7/2}+n^{-1/2}K^{3/2})\sqrt{\log(K/\delta)}\}$ rate, respectively.

\section{Details of PAC prediction}\label{app:PAC}
Let the calibration set $\calidata=\{(\wt\vx_i,\wt y_i)\}_{i=1}^{m_1}$ follow the source distribution and denote by $r(\wt\vx,\wt y)$ the nonconformity score trained on an independent training set.
The PAC prediction set $F_{\textrm{PAC}}(\wt\vx;\bomega,{\calidata})$ under label shift \citep{vovk2012conditional, park2021pac, si2023pac} is defined by
\bse
\pr_{\calidata\sim P^{m_1}}[\pr_{(\wt\rmX_0,\wt Y_0)\sim Q}\{\wt Y_0\in F_{\textrm{PAC}}
(\wt\rmX_0;\bomega,\calidata)\}\ge1-\epsilon]\ge1-\eta.
\ese

\citet{si2023pac} constructed a PAC prediction set based on the confidence region $\wtsrg\GE$. For a fixed candidate weight vector $\bomega$, calibration set $\calidata$, auxiliary random variable $V\sim\mathrm{Uniform}([0,1])^{m_1}$, and a large constant $b\ge \max_{k\in[K]}\omega_k$, their construction yields a prediction set $F_{\textrm{PAC}}(\cdot;\wtsrg_0,\calidata,V,b)$ satisfying the following PAC guarantee:
%\bse 
%\pr_{\calidata\sim P^{m_1},V}[\pr_{(\wt\rmX_0,\wt Y_0)\sim Q}\{Y_0\in F_{\textrm{PAC}}%(\wt\rmX_0;\wtsrg_0,\calidata,V,b)\}\ge1-\epsilon]\ge1-\eta.
%\ese
%\citet{si2023pac} constructed a set that satisfies a modification of PAC guarantee such that
\be\label{eq:pac'}
\pr_{\calidata\sim P^{m_1},V}[\pr_{(\wt\rmX_0,\wt Y_0)\sim Q}\{\wt Y_0\in F_{\textrm{PAC}}(\wt\rmX_0;\bomega,\calidata,V,b)\}\ge1-\epsilon]\ge1-\eta,
\ee
%where $V=(V_1,\ldots,V_{m_1})^\top\sim \mathrm{Uniform}([0,1])^{m_1}$ and $b=\max_{k\in[K]}\omega_k$.
Then, the PAC prediction set $F_{\textrm{PAC}}(\wt\vx;\bomega,{\calidata},V,b)$ is in the form of
\bse
F_{\textrm{PAC}}({\wt\vx};\bomega,\calidata,V,b)={\{\wt y\in[K]:}r(\wt\vx,\wt y)\le\tau_{\textrm{PAC}}\{T(\bomega,\calidata,V,b)\}{\}},
\ese
where $T(\bomega,\calidata,V,b)=\{(\wt\vx_i,\wt y_i)\in \calidata:V_i\le\omega_{\wt y_i}/b\}$ is a target sample generated by rejection-sampling from $\calidata$.
Let $m_0=|T(\bomega,\calidata,V,b)|$.
Here, $\tau_{\textrm{PAC}}\{T(\bomega,{\calidata},V,b)\}$ is chosen to satisfy
\bse
&&\sum_{(\wt\vx_i,\wt y_i)\in T(\bomega,\calidata,V,b)}\mathbb{1}\{\wt y_i\not\in F_{\textrm{PAC}}(\wt\vx_i;\bomega,\calidata,V,b)\}\\
&&=\sum_{(\wt\vx_i,\wt y_i)\in T(\bomega,\calidata,V,b)}\mathbb{1}[r(\wt\vx_i,\wt y_i)>\tau_{\textrm{PAC}}\{T(\bomega,\calidata,V,b)\}]\le k(m_0,\epsilon,\eta),
\ese
where
\bse
k(m_0,\epsilon,{\eta})=\max\{k:F_{\textrm{Binomial}(m_0,\epsilon)}(k)\le\eta\}.
\ese
That is, $\tau_{\textrm{PAC}}\{T(\bomega,\calidata,V,b)\}$ is the $k(m_0,\epsilon,\eta)$-th largest value of $\{r(\wt\vx_i,\wt y_i):(\wt\vx_i,\wt y_i)\in T(\bomega,\calidata,V,b)\}$.
Note that $F_{\textrm{Binomial}({m_0},\epsilon)}(\cdot)$ is the CDF of $\textrm{Binomial}({m_0},\epsilon)$.
If the true importance weight $\bomega^*$ replaces $\bomega$, then the modified PAC condition \eqref{eq:pac'} is satisfied.
When the confidence region $\wtsrg_0$ with $\pr(\bomega^*\in\wtsrg_0)\ge1-\delta$ is provided, we can define
\be\label{eq:tau_pac}
\tau_{\textrm{PAC}}\{T(\wtsrg_0,\calidata,V,b)\}=\sup_{{\bomega}\in\wtsrg_0}\tau_{\textrm{PAC}}\{T({\bomega},\calidata,V,b)\}
\ee
and
\be\label{eq:F_pac}
F_{\textrm{PAC}}(\wt\vx;\wtsrg_0,\calidata,V,b)={\{\wt y\in[K]:}r(\wt\vx,\wt y)\le\tau_{\textrm{PAC}}\{T(\wtsrg_0,\calidata,V,b)\}{\}}.
\ee
Then $F_{\textrm{PAC}}(\wt\vx;\wtsrg_0,\calidata,V,b)$ satisfies the modified PAC condition \eqref{eq:pac'} with {$1-\eta$ replaced by $1-\eta-\delta$}.

As in Section~\ref{sec:conformal-prediction}, the resulting prediction set is monotone in the weight region. Therefore, replacing $\wtsrg\GE$ by the tighter region $\wtsrg\LP$ yields a smaller prediction set while preserving the same PAC guarantee \citep{park2021pac}.

\begin{theorem}\label{th:PACww}
If $\wtsrg_1\subset\wtsrg_2$, then $F_{\mathrm{PAC}}(\wt\vx_0;\wtsrg_1,\calidata,V,b)\subset F_{\mathrm{PAC}}(\wt\vx_0;\wtsrg_2,\calidata,V,b)$ for all $\wt\vx_0$. In particular, $F_{\mathrm{PAC}}(\wt\vx_0;\wtsrg\LP,\calidata,V,b)\subset F_{\mathrm{PAC}}(\wt\vx_0;\wtsrg\GE,\calidata,V,b)$.
\end{theorem}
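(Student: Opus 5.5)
The plan mirrors the proof of Theorem~\ref{thm:cp-monotone-omega}: first show that the threshold in \eqref{eq:tau_pac} is monotone in the weight region, then pass this monotonicity through the definition \eqref{eq:F_pac}. The argument is pointwise, so we fix the calibration set $\calidata$, the auxiliary vector $V$, the constant $b$ and the test point $\wt\vx_0$ throughout; no probabilistic statement is needed.

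\emph{Step 1 (threshold monotonicity).} Suppose $\wtsrg_1\subset\wtsrg_2$. Every candidate $\bomega\in\wtsrg_1$ also lies in $\wtsrg_2$, so the family
\[
\{\tau_{\textrm{PAC}}\{T(\bomega,\calidata,V,b)\}:\bomega\in\wtsrg_1\}
\]
is a subfamily of the corresponding family indexed by $\wtsrg_2$. A supremum over a subset is at most the supremum over the whole set, which gives
\[
\tau_{\textrm{PAC}}\{T(\wtsrg_1,\calidata,V,b)\}\le\tau_{\textrm{PAC}}\{T(\wtsrg_2,\calidata,V,b)\}.
\]
This holds in the extended reals as well, so it covers the cases where a supremum is infinite, or where $\wtsrg_1$ is empty and its supremum is $-\infty$.

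\emph{Step 2 (set inclusion).} Take any $\wt y\in F_{\mathrm{PAC}}(\wt\vx_0;\wtsrg_1,\calidata,V,b)$. By \eqref{eq:F_pac},
\[
r(\wt\vx_0,\wt y)\le\tau_{\textrm{PAC}}\{T(\wtsrg_1,\calidata,V,b)\}\le\tau_{\textrm{PAC}}\{T(\wtsrg_2,\calidata,V,b)\},
\]
where the second inequality is Step~1. Hence $\wt y\in F_{\mathrm{PAC}}(\wt\vx_0;\wtsrg_2,\calidata,V,b)$, which proves the first claim. Since $b$, $V$ and $\calidata$ are shared by both sides, the same $b$ must be valid for both regions, that is, $b\ge\max_k\omega_k$ on $\wtsrg_2$. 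This is automatic when $b$ is valid for the larger region, because $\wtsrg_1\subset\wtsrg_2$.

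\emph{Step 3 (the particular case).} Corollary~\ref{cor:www} gives $\wtsrg\LP\subset\wtsrg\GE$ whenever $\wtsrg\GE$ exists. Applying Step~2 with $\wtsrg_1=\wtsrg\LP$ and $\wtsrg_2=\wtsrg\GE$ yields
\[
F_{\mathrm{PAC}}(\wt\vx_0;\wtsrg\LP,\calidata,V,b)\subset F_{\mathrm{PAC}}(\wt\vx_0;\wtsrg\GE,\calidata,V,b).
\]

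The only delicate point is the well-definedness of the thresholds: the per-$\bomega$ threshold $\tau_{\textrm{PAC}}\{T(\bomega,\cdot)\}$ is not itself monotone in $\bomega$, since the rejection sample $T$ changes with $\bomega$. The argument avoids this entirely, because it uses only the order-preserving property of the supremum over nested index sets and never compares thresholds at different candidate weights.
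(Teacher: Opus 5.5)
Your proof is correct and takes essentially the paper's approach. The supremum in \eqref{eq:tau_pac} taken over the nested sets $\wtsrg_1\subset\wtsrg_2$ gives the threshold inequality, and \eqref{eq:F_pac} turns this into the set inclusion; the particular case follows from $\wtsrg\LP\subset\wtsrg\GE$ in Corollary~\ref{cor:www}, which the paper leaves implicit.
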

%\subsection{Proof of Theorem~\ref{th:PACww}}\label{sec:PACww}
\begin{proof}
First, \eqref{eq:tau_pac} implies $\tau_{\mathrm{PAC}}\{T(\wtsrg_1,\calidata,V,b)\}\le \tau_{\mathrm{PAC}}\{T(\wtsrg_2,\calidata,V,b)\}$
for all $\wt y$. Then \eqref{eq:F_pac} gives the result.
\end{proof}

\begin{theorem}\label{th:pac}
Suppose that $\pr(\bomega^*\in\wtsrg_0)\ge1-\delta$.
Then
\bse
\pr_{\calidata\sim P^{m_1},V}[\pr_{(\wt\rmX_0,\wt Y_0)\sim Q}\{\wt Y_0\in F_{\textrm{PAC}}(\wt\rmX_0;\wtsrg_0,\calidata,V,b)\}\ge1-\epsilon]\ge1-\eta-\delta.
\ese
\end{theorem}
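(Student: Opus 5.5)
The plan mirrors the proof of Theorem~\ref{thm:cp-omega0}: a union bound over two failure events. One is the event $\bomega^*\notin\wtsrg_0$, which has probability at most $\delta$. The other is the event that the oracle PAC set, built with $\bomega^*$, fails its $(\epsilon,\eta)$ guarantee, which has probability at most $\eta$ by the modified PAC condition~\eqref{eq:pac'}. We take the latter as given from \citet{si2023pac} and \citet{park2021pac}, applied with $\bomega=\bomega^*$ and $b\ge\max_k\omega^*_k$.

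First, we establish a deterministic inclusion. Fix any realization of $\calidata$ and $V$ on the event $\{\bomega^*\in\wtsrg_0\}$. By the definition~\eqref{eq:tau_pac},
\[
\tau_{\mathrm{PAC}}\{T(\bomega^*,\calidata,V,b)\}\le\tau_{\mathrm{PAC}}\{T(\wtsrg_0,\calidata,V,b)\},
\]
so by~\eqref{eq:F_pac} we get $F_{\mathrm{PAC}}(\wt\vx;\bomega^*,\calidata,V,b)\subset F_{\mathrm{PAC}}(\wt\vx;\wtsrg_0,\calidata,V,b)$ for every $\wt\vx$. This is the same monotonicity argument used in Theorem~\ref{th:PACww}, viewing $\{\bomega^*\}\subset\wtsrg_0$ as a singleton region. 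Consequently, on this event the inner target-coverage probability for $\wtsrg_0$ is at least the inner coverage for $\bomega^*$.

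Second, define $G=\{\pr_{Q}(\wt Y_0\in F_{\mathrm{PAC}}(\wt\rmX_0;\bomega^*,\calidata,V,b))\ge1-\epsilon\}$ and $E=\{\bomega^*\in\wtsrg_0\}$. By the first step, $G\cap E$ is contained in the good event for $\wtsrg_0$. Hence the failure probability for $\wtsrg_0$ is at most $\pr(G^c)+\pr(E^c)\le\eta+\delta$.

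The main subtlety is the probability space. The event $E$ is determined by $\srcdata,\tgtdata$ (through $\crgn,\qrgn$), whereas $G$ is determined by $(\calidata,V)$. The outer probability must therefore be read as joint over all of these. The union bound needs no independence, so it goes through as long as $\pr(E^c)\le\delta$ is understood under that joint law. If $\wtsrg_0$ is instead treated as fixed, the statement should be read as holding after additionally averaging over the data defining $\wtsrg_0$.

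There is one further technical point. The supremum in~\eqref{eq:tau_pac} may be $+\infty$ or fail to be attained, and $b$ must dominate $\omega_k$ for all $\bomega\in\wtsrg_0$, or at least for $\bomega^*$. Only the inequality $\tau(\bomega^*)\le\sup$ is needed, so attainment is irrelevant. We would state $b\ge\max_k\omega_k^*$ on $E$, for example $b\ge\max_k\ol\omega_k$ computed from $\wtsrg\LP$, to ensure the oracle guarantee applies.
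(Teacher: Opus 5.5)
Your proof is correct. It is more self-contained than the paper's, which is a one-line appeal to Theorem~3 of \citet{si2023pac}.

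You reproduce the two-event decomposition the paper uses for Theorem~\ref{thm:cp-omega0}. On $\{\bomega^*\in\wtsrg_0\}$ the supremum threshold \eqref{eq:tau_pac} dominates the oracle threshold, so the oracle PAC set is contained in the robust one. A union bound with the oracle guarantee \eqref{eq:pac'} at $\bomega^*$ then gives $1-\eta-\delta$.

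What your route buys is that its only external input is \eqref{eq:pac'} at the true weights. The argument therefore applies verbatim to the paper's own construction for an arbitrary region $\wtsrg_0$. The cited theorem is stated for the procedure of \citet{si2023pac}, which is built on $\wtsrg\GE$. Transferring it to $\wtsrg$ or $\wtsrg\LP$ tacitly relies on exactly your monotonicity step. Your remarks that the outer probability is really joint over $(\srcdata,\tgtdata,\calidata,V)$, and that only $b\ge\max_k\omega_k^*$ matters, also make explicit what the paper's statement leaves implicit. The citation, in turn, buys brevity.

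One small caveat concerns your data-dependent choice of $b$. ``No independence needed'' is right for a constant $b$, as in the paper. With $b=\max_k\ol\omega_k$, however, the event $G$ depends on $(\srcdata,\tgtdata)$ through $b$. Bounding $\pr(E\cap G^c)\le\eta$ then requires three things:
\begin{itemize}
\item conditioning on $(\srcdata,\tgtdata)$;
\item using that $(\calidata,V)$ is independent of them;
\item having $\wtsrg_0\subset\wtsrg\LP$, so that $b\ge\max_k\omega_k^*$ on $E$.
\end{itemize}
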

\begin{proof}
The proof follows from Theorem~3 of \cite{si2023pac}.
\end{proof}

\section{Gaussian elimination with intervals} \label{sec:gaussian}
 Given that $\C^* \in \crgn = [\ul\C, \ol\C]$ and $\q^* \in \qrgn =[\ul\q, \ol\q]$, \citet{si2023pac} introduce an intuitive way, which they named Gaussian elimination with intervals, of finding $\wtsrg\FS$ that contains $\bomega^* = \C^{*-1} \q^*$. 
 Suppose that $\ul C_{ij} \ge 0$, $\ul q_i > 0$, and $\omega_i^* > 0$ for $i,j \in [K]$. 
They follow two phases of Gaussian elimination when solving a system of linear equations $\C^* \bomega^* = \q^*$ and derive the classwise interval for $\omega_i^*$. 
First, set $\ul C_{ij}^{0} = \ul C_{ij}$, $\ol C_{ij}^{0} = \ol C_{ij}$, $\ul q_{i}^{0} = \ul q_{i}$, and $\ol q_{i}^{0} = \ol q_{i}$. In the first phase (forward elimination), the elementary row operations are applied sequentially for $k=1,...,K-1$ to delete the $(i,k)$ element in the matrix for $i>k$ by adding the multiple of the $k$-th row. 
 Then the lower bound $\ul C_{ij}^{k+1}$ and the upper bound $\ol C_{ij}^{k+1}$ are derived from the interval $[\ul \C^{k}, \ol \C^{k}]$ at the $k$-th step as
\bse
\ul C_{ij}^{k+1} = 
\begin{cases}
    0, & \hbox{if } i>k, j \le k,\\
    \ul C_{ij}^k - \frac{\ol C_{ik}^k\ol C_{kj}^k}{\ul C_{kk}^k}, & \hbox{if } i,j > k,\\
    \ul C_{ij}^k, & \hbox{otherwise.}\\
\end{cases}
\ese
\bse
\ol C_{ij}^{k+1} = 
\begin{cases}
    0, & \hbox{if } i>k, j \le k,\\
    \ol C_{ij}^k - \frac{\ul C_{ik}^k\ul C_{kj}^k}{\ol C_{kk}^k}, & \hbox{if } i,j > k,\\
    \ol C_{ij}^k, & \hbox{otherwise.}\\
\end{cases}
\ese
Simultaneously, $\ul q_i^{k+1}$ and $\ol q_i^{k+1}$ are obtained from the same row operations to be
\bse
\ul q_{i}^{k+1} = 
\begin{cases}
    \ul q_{i}^k - \frac{\ol C_{ik}^k\ol q_{k}^k}{\ul C_{kk}^k}, & \hbox{if } i > k,\\
    \ul q_{i}^k, & \hbox{otherwise.}\\
\end{cases}
\ese
\bse
\ol q_{i}^{k+1} = 
\begin{cases}
    \ol q_{i}^k - \frac{\ul C_{ik}^k\ul q_{k}^k}{\ol C_{kk}^k}, & \hbox{if } i > k,\\
    \ol q_{i}^k, & \hbox{otherwise.}\\
\end{cases}
\ese
Then $C_{ij}^{*,k+1}$ and $q_{i}^{*,k+1}$, which would have been obtained in the forward elimination step solving $\C^* \bomega^* = \q^*$, always lie in $[\ul C_{ij}^{k+1}, \ol C_{ij}^{k+1}]$ and $[\ul q_{i}^{k+1}, \ol q_{i}^{k+1}]$.
In the second phase (back substitution), they compute $\ul \omega_i$ and $\ol \omega_i$, iteratively for $i=K,...,1$, replacing the truth with intervals as in the first phase.
\bse
 \ul s_i = \sum_{j = i+1}^K \ul C_{ij}^{K} \ul \omega_j &\hbox{ and }& \ol s_i = \sum_{j = i+1}^K \ol C_{ij}^{K} \ol \omega_j ,\\
\ul \omega_i = \frac{\ul q_i - \ol s_i}{\ol C_{ii}^{K}} &\hbox{ and }&\ol \omega_i = \frac{\ol q_i - \ul s_i}{\ul C_{ii}^{K}}.
\ese
Then $\wtsrg\GE$ is defined as the $K$-dimensional hypercube  $\prod_{i=1}^K [\ul \omega_i, \ol \omega_i]$.
\citet{si2023pac} provide a theoretical result that their method yields $\omega^* \in \wtsrg\GE$ if $\ul C_{ij}^k \ge 0$, $\ul C_{ii}^k > 0$, and $\ul q_i^k \ge 0$ for all $i,j,k \in [K]$.
The basic assumption in order to satisfy the condition is that $\ol C_{ik}\ll \ul C_{kk}$. This is ensured when the classifier $g(X)$ is accurate, that is, when the diagonal terms $C_{kk}^{*}$ in $\C^*$ dominate non-diagonal terms. If the assumption is violated, we may encounter a possibility that $C_{kk}^{*,k} \approx 0$, which may lead to $\ul C_{kk}^{k} \le 0$. Then in the forward elimination phase, $\ul C_{ij}^{k+1}$ for all $i,j>k$ will be $-\infty$, which may make the algorithm impractical.
Furthermore, if $\ul q_i \le \ol s_i$ or $\ul C_{ii}^K \le 0$ for some $i$, then the back substitution phase would lead to $\ul \omega_i \le 0$ or $\ol \omega_i = \infty$, which does not provide any information about the interval of $\omega_i^*$.
 In order to deal with the nonpositive bounds, they mention that choosing a wider margin, which would, however, make $\wtsrg\GE$ larger than its optimal size.

\section{Detailed experimental setup}
\label{app:experiment_details}
The experiment was implement using \texttt{Python 3.13} on MacBook Pro with the Apple M3 Pro chip, which has a 11-core CPU (5 performance cores and 6 efficiency cores), an 14-core GPU, and 18\,GB of unified memory. The classifiers were all trained using Google Colab with  NVIDIA H100 GPU.

\subsection{Benchmark data organization and classifiers}
\label{app:benchmark_classifier}

\setlength{\intextsep}{11pt}
\begin{table}[H]
  \caption{Summary of dataset splits and architectures. 
  \textsuperscript{*}VGG16, ResNet18, PreActResNet18, MobileNetV2, and RegNetX.}
  \label{tab:classifier_summary}
  \begin{center}
      %\resizebox{0.97\linewidth}{!}{
\begin{tabular}{@{}llcc@{}}
\toprule
Dataset & Architecture & \makecell{Training Pool\\(Train + Test)} & Analysis Set \\
 \midrule
AGNews   & RoBERTa   & 60,000 + 7,600   & 60,000 \\
MNIST    & 4-Layer CNN    & 30,000 + 10,000  & 30,000 \\
CIFAR-10 & Multiple\textsuperscript{*} & 25,000 + 10,000  & 25,000 \\
N24News  & BERT + ViT     & 24,487 + 6,122   & 30,609 \\ \bottomrule
\end{tabular}%}
  \end{center}
\end{table}

To obtain a classifier $g$,
we did the following. For AGNews, we fine-tuned 10 RoBERTa \citep{liu2019roberta} models using distinct random seeds. For MNIST,  we trained 10 four-layer CNNs with varying initializations. For CIFAR-10, we utilized five distinct architectures, VGG16 \citep{simonyan2014very}, ResNet18 \citep{he2016deep}, PreActResNet18 \citep{he2016identity}, MobileNetV2 \citep{sandler2018mobilenetv2}, and RegNetX \citep{radosavovic2020designing}, each trained with two different seeds. For N24News, we employed a multi-modal late-fusion architecture combining BERT \citep{devlin2019bert} and ViT \citep{dosovitskiy2020image}, training 10 models with different seeds.

\paragraph{Text (AGNews).} 
We created a stratified analysis set of 60,000 training examples (15,000 per class) using a fixed random seed. The remaining 60,000 examples were used for training. We fine-tuned \texttt{RoBERTa} for 3 epochs. Hyperparameters: learning rate $3\times10^{-5}$, batch sizes of 32 (train) and 64 (eval), AdamW optimizer (0.01 weight decay), and 500 warmup steps. Mixed-precision (fp16) was enabled.

\paragraph{Vision benchmarks (MNIST \& CIFAR-10).} 
For both datasets, we use the half of  total  training samples with the 10,000 standard test samples. 
\begin{itemize}
    \item \textbf{MNIST}: We trained 10 classifiers (10 epochs each) using a CNN with two convolutional layers ($5\times5$ kernels, 10 and 20 filters) and two fully connected layers (50 hidden units). The remaining 30,000 original training observations were reserved as the analysis set.
    \item \textbf{CIFAR-10}: We trained 10 classifiers (two instances each of \texttt{VGG},  \texttt{ResNet18},  \texttt{PreActResNet18} \texttt{MobileNetV2}, and \texttt{RegNetX} ). Each model was trained for 200 epochs. The remaining 25,000 training samples served as the analysis set.
\end{itemize}

\paragraph{Multimodal (N24News).} 
We implemented a late-fusion model concatenating 768-d features from \texttt{BERT} and \texttt{ViT} followed by a linear layer to 24 classes. Using a 50/50 stratified split ($\approx$ 30,609 samples each), models were trained for 3 epochs (lr $3\times10^{-5}$, batch size 16) across 10 replications. 

\subsection{Real-world data (nuImages) organization and classifiers}
\label{app:nuImages_details}
\subsubsection{Dataset and Preprocessing}
We evaluate the proposed MaC-LP framework using the \textbf{nuImages}\footnote{Licenses: CC BY-NC-SA 4.0} dataset \cite{caesar2020nuscenes}, a large-scale autonomous driving dataset featuring high-resolution images ($1600 \times 900$ pixels) captured across diverse urban environments, which have 23 annotated classes. Our study utilizes the full metadata of the nuImages v1.0 release, encompassing a total of 50,786 training images and 13,394 validation(test; in our context for training a classifier) images from Singapore, alongside 16,493 training images and 3,051 validation images from Boston.

To transition from raw street-scene images to a structured classification task, we perform object-centric cropping. For each annotated instance, we extract the image content within the ground-truth bounding box. To ensure the reliability of the classifier $g(x)$ and mitigate label noise from low-resolution distant objects, we apply a visibility filter: only crops with a large resolution (larger than $64 \times 64$ pixels) are retained. Following extraction, all crops are resized to $224 \times 224$ pixels using bicubic interpolation and normalized according to ImageNet statistics to facilitate the use of pre-trained deep convolutional neural networks.

\subsubsection{Data partitioning, classifiers, and setting}
Consistent with the methodology required for the MaC-LP framework, we partition the Singapore and Boston data into two functionally distinct subsets:

\begin{enumerate}
    \item \textbf{Training Pool (train + test):} We combine the original training and validation splits from the nuImages metadata. From this unified pool, we randomly sample \textbf{$50\%$} of the Singapore train data (stratified by category) with the whole Singapore validation dataset and fine-tune the classification model $g(x)$. This phase optimizes the model to capture the feature representations of the source domain.
    
    \item \textbf{Analysis Set:} The remaining \textbf{$50\%$} of the Singapore train data and the whole Boston data serve as the Analysis Set. As specified in the MaC-LP framework, this set remains unseen during the training phase. It is utilized exclusively for label shift estimation and  downstream application tasks. 

    \item \textbf{Classifier:} We used 10 different random seeds fine-tuned ImageNet-pretrained backbone for our classifier $g(x)$ directly on the long-tailed source distribution without artificial rebalancing solely on Singapore data. The classifier was tuned for 5 epochs given the robust feature representations provided by the ImageNet-pretrained V1 model.
\end{enumerate}

The resulting data distribution for the Training Pool and Analysis sets is summarized in Table~\ref{tab:train_stats} after filtering the classes which have less than 100 samples. 
This results in 15 classes retained in our real-world data application. In Table~\ref{tab:val_stats}, only Singapore data are used in training classifiers process. We denote the number of samples in the source domain as \textbf{$m$} and in the target domain as \textbf{$n$}. This stratified splitting strategy ensures that our estimation of $\C$ is unbiased, as it is derived from data independent of the training process, thereby preserving the mathematical integrity of the linear programming bounds.

In our experimental design, \textit{Singapore} serves as the source domain and \textit{Boston} as the target domain.   The ``true" label proportions in both domain and the importance weights (Ratio) are in Table~\ref{tab:train_stats}.

\subsection{Computation of $\tau_{CP}$}
\label{app:implementationCP}
The maximum of $\tau_{CP}(y_0;\wtsrg\LP)$ is achieved at one of the vertices because the property of $\tau_{CP}(y_0, \bomega)$ and its monotonicity in each $\omega_k$, resulting a coordinate-wise procedure to compute its maximum. In our study, we found the exact maximum for $\tau_{CP}(y_0;\wtsrg\LP)$ and for $\tau_{CP}(y_0;\wtsrg\GE)$. 
For $\tau_{CP}(y_0;\wtsrg)$, we computed the maximum over 1,000 Monte Carlo samples from $\wtsrg$, which leads to a negligible estimation error as explained below. $\mathbb{P}[\max_{1\le j\le n_{MC}}\tau_{CP}(y_0;\bomega_j)\le\sup_\bomega\tau_{CP}(y_0;\bomega)-\epsilon]\approx\exp[-f(\sup_\bomega\tau_{CP}(y_0;\bomega))\epsilon n_{MC}]$, where $f$ denotes the PDF of $\tau_{CP}(y_0;\bomega)$ over uniformly sampled $\bomega\in\wtsrg$. For any positive $f(\cdot)$ and $\epsilon$, this probability goes to zero when $n_{MC}\to\infty$. This implies the difference between the Monte Carlo estimated $\tau_{CP}(y_0;\wtsrg)$ and the true $\tau_{CP}(y_0;\wtsrg)$ goes to zero in probability as $n_{MC}\to\infty$. %Thus, the computational cost is proportional to $n_{MC}$.

\newpage
\begin{table}[H]
\centering
\caption{Statistics of the original Training Set of nuImages (larger than $64 \times 64$ pixels). Singapore and Boston represent the Source and Target domains, respectively.}
\label{tab:train_stats}
\resizebox{0.97\linewidth}{!}{
\begin{tabular}{lrrrcr}
\toprule
 & \multicolumn{1}{c}{\textbf{Source}} & \multicolumn{1}{c}{\textbf{Target}} & & & \\
\textbf{Category} & \multicolumn{1}{c}{\textbf{Singapore}} & \multicolumn{1}{c}{\textbf{Boston}} & \textbf{$p(y)$} & \textbf{$q(y)$} & \textbf{Ratio $w(y)$} \\
\midrule
human.pedestrian.adult & 8,493 & 3,148 & 0.0889 & 0.0599 & 0.6735 \\
human.pedestrian.construction\_worker & 1,469 & 107 & 0.0154 & 0.0020 & 0.1325 \\
human.pedestrian.personal\_mobility & 389 & 1 & 0.0041 & $1.9\times10^{-5}$ & 0.0047 \\
movable\_object.barrier & 17,256 & 3,990 & 0.1807 & 0.0759 & 0.4202 \\
movable\_object.debris & 782 & 111 & 0.0082 & 0.0021 & 0.2579 \\
movable\_object.pushable\_pullable & 988 & 93 & 0.0103 & 0.0018 & 0.1710 \\
movable\_object.trafficcone & 4,024 & 2,703 & 0.0421 & 0.0514 & 1.2206 \\
static\_object.bicycle\_rack & 732 & 560 & 0.0077 & 0.0107 & 1.3901 \\
vehicle.bicycle & 5,222 & 942 & 0.0547 & 0.0179 & 0.3278 \\
vehicle.bus.rigid & 1,925 & 909 & 0.0202 & 0.0173 & 0.8580 \\
vehicle.car & 38,293 & 29,511 & 0.4009 & 0.5614 & 1.4004 \\
vehicle.construction & 2,197 & 752 & 0.0230 & 0.0143 & 0.6220 \\
vehicle.motorcycle & 5,943 & 610 & 0.0622 & 0.0116 & 0.1865 \\
vehicle.trailer & 139 & 1,983 & 0.0015 & 0.0377 & 25.9230 \\
vehicle.truck & 7,660 & 7,143 & 0.0802 & 0.1359 & 1.6945 \\
\midrule
\textbf{TOTAL} & \textbf{95,512} & \textbf{52,563} & \textbf{1.0000} & \textbf{1.0000} & -- \\
\bottomrule
\end{tabular}
}
\end{table}

\begin{table}[H]
\centering
\caption{Statistics of the original Validation Set of nuImages (larger than  $64 \times 64$ pixels). Used alongside Training Set for category-wise performance monitoring.}
\label{tab:val_stats}
\resizebox{0.97\linewidth}{!}{
\begin{tabular}{lrrrrr}
\toprule
 & \multicolumn{1}{c}{\textbf{Source}} & \multicolumn{1}{c}{\textbf{Target}} & & & \\
\textbf{Category} & \multicolumn{1}{c}{\textbf{Singapore}} & \multicolumn{1}{c}{\textbf{Boston}} & \textbf{$p(y)$} & \textbf{$q(y)$} & \textbf{Ratio $w(y)$} \\
\midrule
human.pedestrian.adult & 1,851 & 647 & 0.0762 & 0.0584 & 0.7665 \\
human.pedestrian.construction\_worker & 421 & 14 & 0.0173 & 0.0013 & 0.0729 \\
human.pedestrian.personal\_mobility & 87 & 1 & 0.0036 & 0.0001 & 0.0252 \\
movable\_object.barrier & 4,903 & 749 & 0.2018 & 0.0676 & 0.3350 \\
movable\_object.debris & 183 & 20 & 0.0075 & 0.0018 & 0.2397 \\
movable\_object.pushable\_pullable & 227 & 12 & 0.0093 & 0.0011 & 0.1159 \\
movable\_object.trafficcone & 1,112 & 640 & 0.0458 & 0.0578 & 1.2621 \\
static\_object.bicycle\_rack & 177 & 119 & 0.0073 & 0.0107 & 1.4744 \\
vehicle.bicycle & 1,223 & 246 & 0.0503 & 0.0222 & 0.4411 \\
vehicle.bus.rigid & 580 & 242 & 0.0239 & 0.0218 & 0.9150 \\
vehicle.car & 9,383 & 6,547 & 0.3861 & 0.5908 & 1.5301 \\
vehicle.construction & 665 & 151 & 0.0274 & 0.0136 & 0.4979 \\
vehicle.motorcycle & 1,329 & 138 & 0.0547 & 0.0125 & 0.2277 \\
vehicle.trailer & 48 & 294 & 0.0020 & 0.0265 & 13.4318 \\
vehicle.truck & 2,111 & 1,261 & 0.0869 & 0.1138 & 1.3099 \\
\midrule
\textbf{TOTAL} & \textbf{24,300} & \textbf{11,081} & \textbf{1.0000} & \textbf{1.0000} & -- \\
\bottomrule
\end{tabular}
}
\end{table}

\newpage
\section{Additional experiential results}
\subsection{Accuracy and expected calibration error (ECE) of classifiers}
\begin{table}[H]
  \caption{Summary of accuracy for 10 classifiers trained on each dataset.}
  \label{tab:acc}
  \centering
  %\resizebox{0.97\linewidth}{!}{
\begin{tabular}{c|c|c|c|c|c}
\toprule
Dataset & AGNews & CIFAR-10 & MNIST & N24News & nuImages\\
 \midrule
Accuracy mean(std) & 94.79\%(0.14) & 91.63\%(0.92) & 97.49\%(0.37) & 85.99\%(0.43) & 95.08\%(0.27)\\
\bottomrule
\end{tabular}%}
  %\end{center}
\end{table}

\label{app:ece}
\begin{figure}[H]
    
    \centering
    {\includegraphics[width=.7\linewidth]{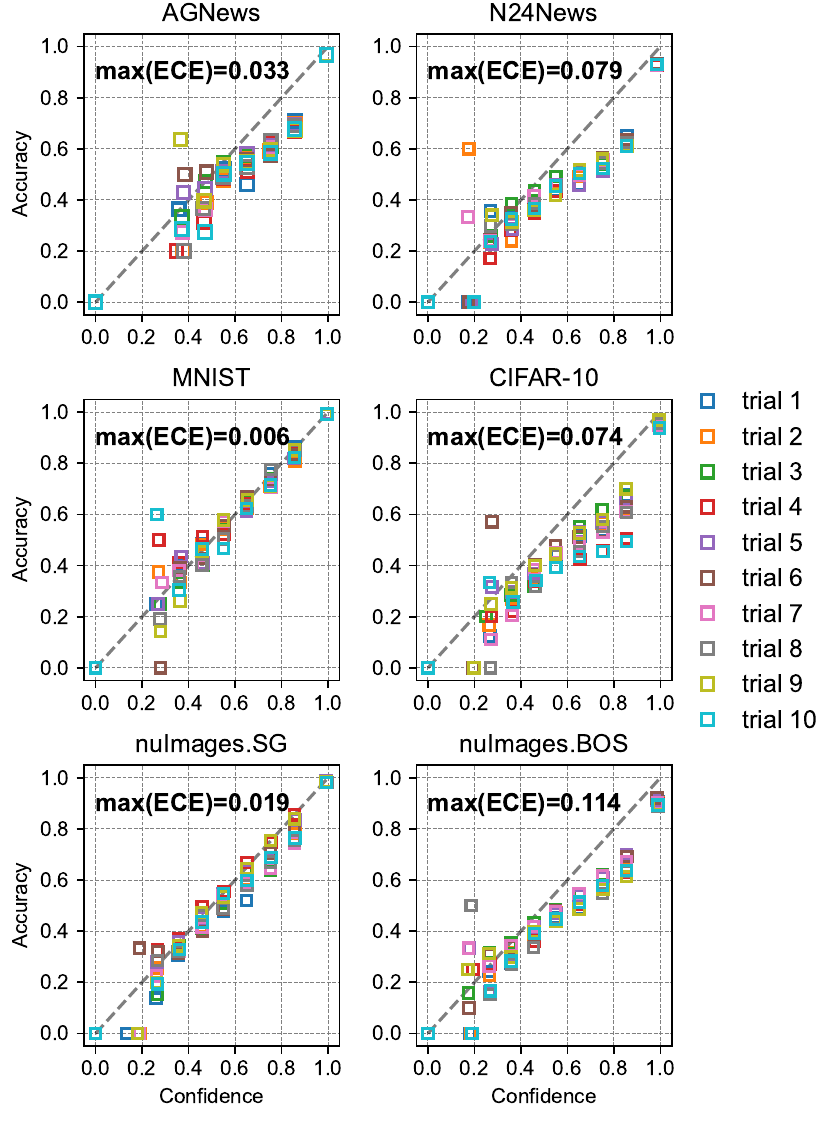}}
        \caption{Expected calibration error (ECE) for each classifier on each dataset. The first column looks all classifiers are well-calibrated, whereas the classifiers on the second column datasets show different levels of uncalibrated performance.}
        \label{fig:ece}
\end{figure}

\subsection{Tightness of confidence regions}
\label{app:lvres}
\begin{figure}[H]
\centering{\includegraphics[width=.55\linewidth]{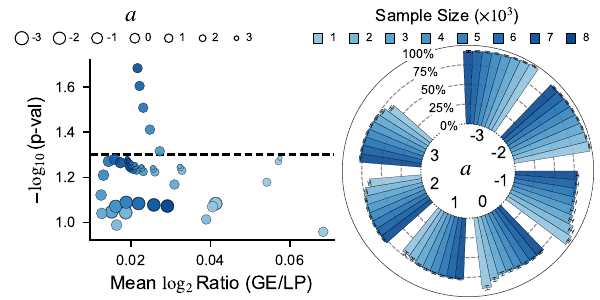}}
        \caption{AGNews. (Left) Volcano plot of the interval length ratios (GE vs. LP) and their corresponding $t$-test significance levels. (Right) Volume ratio of $\wtsrg$ to $\wtsrg\LP$ across varying shift intensities $a$ and sample sizes.}
        \label{fig:agnewsLV}
\end{figure}

\begin{figure}[H]\centering{\includegraphics[width=.55\linewidth]{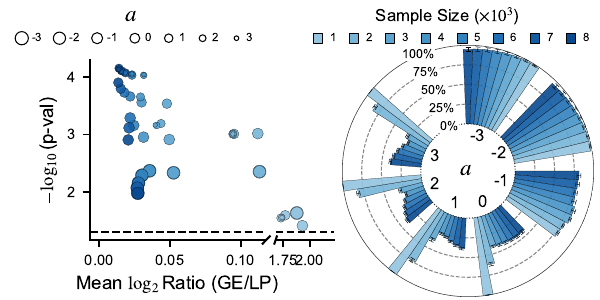}}
        \caption{MNIST. (Left) Volcano plot of the interval length ratios (GE vs. LP) and their corresponding $t$-test significance levels. (Right) Volume ratio of $\wtsrg$ to $\wtsrg\LP$ across varying shift intensities $a$ and sample sizes.}
        \label{fig:ministLV}
\end{figure}

\begin{figure}[H]
\centering{\includegraphics[width=.55\linewidth]{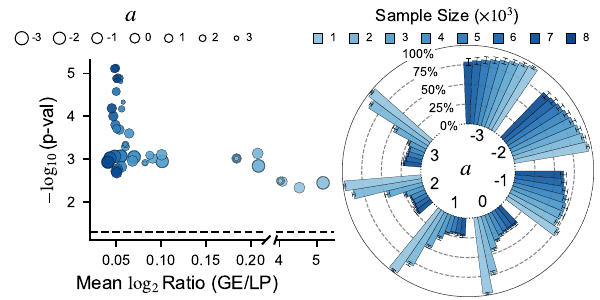}}
        \caption{CIFAR-10. (Left) Volcano plot of the interval length ratios (GE vs. LP) and their corresponding $t$-test significance levels. (Right) Volume ratio of $\wtsrg$ to $\wtsrg\LP$ across varying shift intensities $a$ and sample sizes.}
        \label{fig:cifar10LV}
\end{figure}

\subsection{Assessment of downstream task}
\label{app:cpres}
\begin{figure}[H]
    \centering
    \includegraphics[width=.85\linewidth]{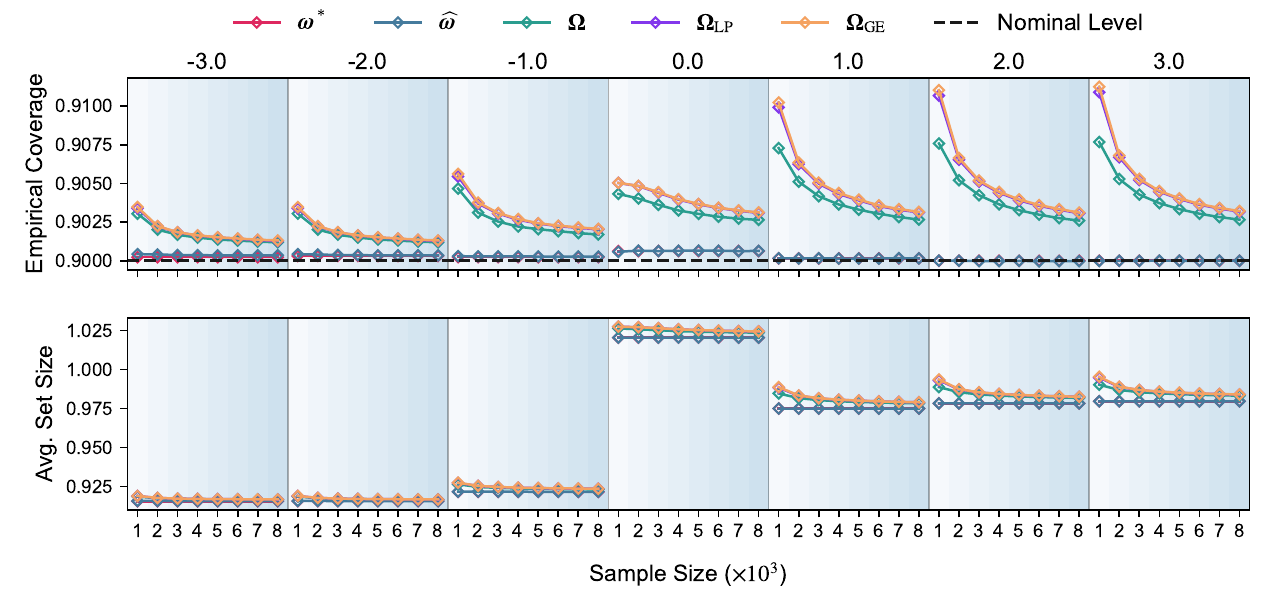}
    \caption{Evaluation of weighted conformal prediction on AGNews. The top panel displays empirical coverage across varying shift intensities ($a$) and sample sizes ($n$). The bottom panel shows average prediction set sizes.}
    \label{fig:cpagnews}
\end{figure}

\begin{figure}[H]
    \centering
    \includegraphics[width=.85\linewidth]{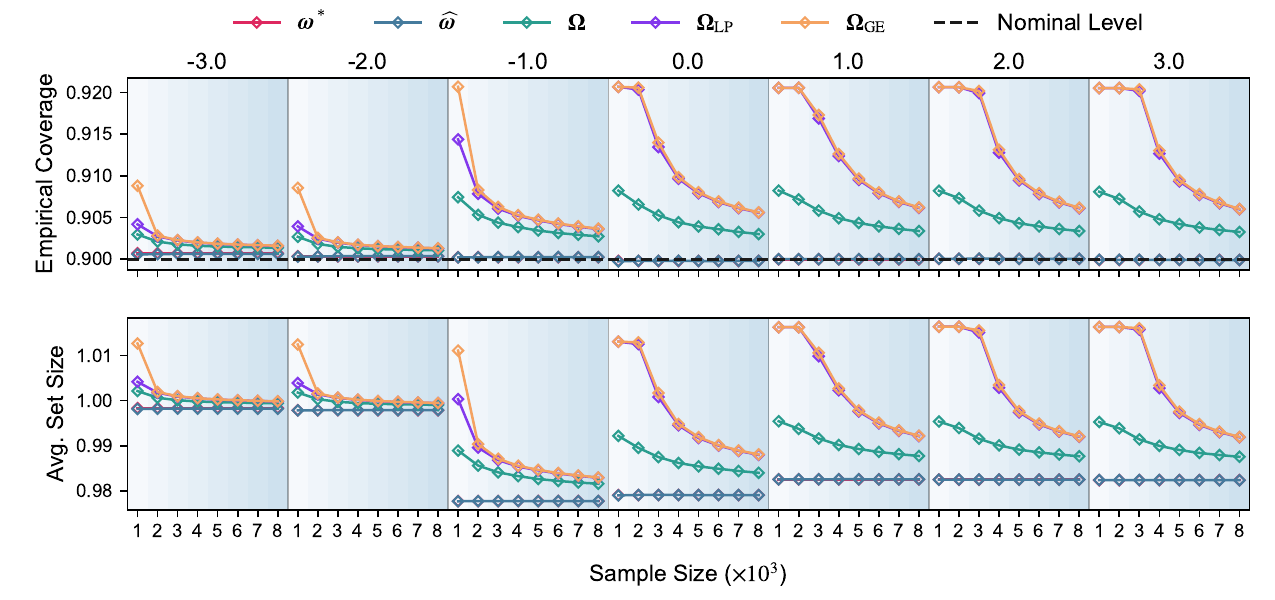}
    \caption{Evaluation of weighted conformal prediction on MNIST. The top panel displays empirical coverage across varying shift intensities ($a$) and sample sizes ($n$). The bottom panel shows average prediction set sizes.}
    \label{fig:cpmnist}
\end{figure}

\begin{figure}[H]
    \centering
    \includegraphics[width=.85\linewidth]{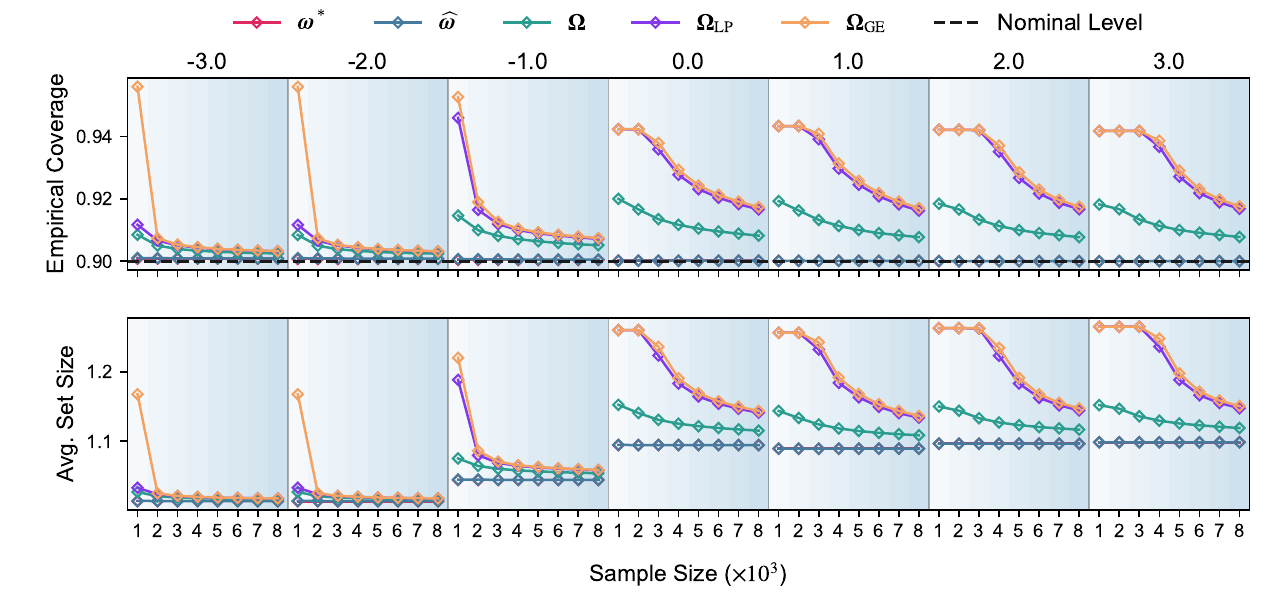}
    \caption{Evaluation of weighted conformal prediction on CIFAR-10. The top panel displays empirical coverage across varying shift intensities ($a$) and sample sizes ($n$). The bottom panel shows average prediction set sizes.}
    \label{fig:cpCifar10}
\end{figure}

\end{document}